\theoremstyle{plain}
\newtheorem{theorem}{Theorem}[section]
\newtheorem{proposition}[theorem]{Proposition}
\theoremstyle{definition}
\theoremstyle{remark}
\providecommand{\customgenericname}{}
\newcommand{\newcustomtheorem}[2]{%
  \newenvironment{#1}[1]
  {%
   \renewcommand\customgenericname{#2}%
   \renewcommand\theinnercustomgeneric{##1}%
   \innercustomgeneric
  }
  {\endinnercustomgeneric}
}
\newcommand{\argmax}{\operatornamewithlimits{arg\,max}}
\title{Goal-Oriented Sequential Bayesian Experimental Design for Causal Learning}
\author{%
Zheyu Zhang \textsuperscript{1,} \thanks{Equal contribution.} \qquad Jiayuan Dong \textsuperscript{2,} $^*$ \qquad Jie Liu\textsuperscript{1, 3} \qquad Xun Huan\textsuperscript{2}\\
\textsuperscript{1} Department of Computer Science and Engineering\\
\textsuperscript{2}Department of Mechanical Engineering \\
\textsuperscript{3}Gilbert S. Omenn Department of Computational Medicine and Bioinformatics \\
University of Michigan \\
Ann Arbor, MI, 48109 \\
\texttt{\{zheyuz, jiayuand, drjieliu, xhuan\}@umich.edu}
}
\begin{document}

\maketitle

\begin{abstract}
We present GO-CBED, a goal-oriented Bayesian framework for sequential causal experimental design. Unlike conventional approaches that select interventions aimed at inferring the full causal model, GO-CBED directly maximizes the expected information gain (EIG) on user-specified causal quantities of interest, enabling more targeted and efficient experimentation. The framework is both non-myopic, optimizing over entire intervention sequences, and goal-oriented, targeting only model aspects relevant to the causal query. To address the intractability of exact EIG computation, we introduce a variational lower bound estimator, optimized jointly through a transformer-based policy network and normalizing flow-based variational posteriors. The resulting policy enables real-time decision-making via an amortized network. We demonstrate that GO-CBED consistently outperforms existing baselines across various causal reasoning and discovery tasks---including synthetic structural causal models and semi-synthetic gene regulatory networks---particularly in settings with limited experimental budgets and complex causal mechanisms. Our results highlight the benefits of aligning experimental design objectives with specific research goals and of forward-looking sequential planning.

\end{abstract}
\section{Introduction}

A structural causal model (SCM) provides a mathematical framework for representing causal relationships via a directed acyclic graph (DAG).
SCMs are foundational across domains such as genomics and precision medicine \citep{tejada2023causal}, economics \citep{varian2016causal}, and the social sciences \citep{sobel2000causal, imbens2024causal}, where understanding the cause-effect relationships is central to scientific inquiry.
Key tasks in causal modeling include:
\textit{causal discovery}, which learns the DAG structure; 
\textit{causal mechanism identification}, which estimates functional dependencies;
and
\textit{causal reasoning}, which answers interventional and counterfactual queries.
All such tasks depend on data. While (passive) observational data
can reveal correlational structures, they often fail to identify the true causal model \citep{verma2022equivalence}. In contrast, (active) \textit{interventional} data are essential for uncovering causal relationships and estimating causal effects---but such experiments are inherently expensive and limited, making careful experimental design essential.

A Bayesian approach to optimal experimental design (BOED) \citep{lindley1956measure, Chaloner_95_BEDReview, Rainforth_24_Modern, Huan_24_Optimal} addresses this challenge by selecting interventions that maximize the expected information gain (EIG). 
BOED provides a principled framework for handling uncertainty in both causal structure and mechanisms. However, most existing causal BOED methods focus on \textit{learning the full model}---for causal discovery or 
mechanism identification---regardless of the scientific goal.

In many real-world applications, the objective is more focused on \textit{causal reasoning}: researchers aim to estimate the effect of a specific intervention, rather than recover the entire causal system.
For instance, in drug discovery, it is often more important to understand how particular molecular targets influence disease pathways than to map the full biological network. 
Shown in Figure \ref{fig:figure1_example}, 
optimizing for full model parameters (middle) leads to experiments that are %
misaligned with such targeted goals (left), resulting in inefficient use of resources (compared to right).
This motivates a \textbf{goal-oriented} approach to BOED---one that tailors interventions to the specific causal queries that matter the most. 

Recent work by \citet{Toth_22_Active} begins to address goal-oriented causal design, but adopts a myopic strategy---selecting only the next experiment without planning ahead. More broadly, most causal BOED approaches are greedy, optimizing interventions one step at a time without accounting for how early decisions influence future learning. 
Overcoming this limitation requires a \textit{non-myopic} framework, which we formulate as a Markov decision process and solve using tools from reinforcement learning (RL) \citep[\S 4]{Rainforth_24_Modern}, \citep[\S 5]{Huan_24_Optimal}. 

\begin{figure}[t]
\centering
\includegraphics[align=t,scale=0.33]{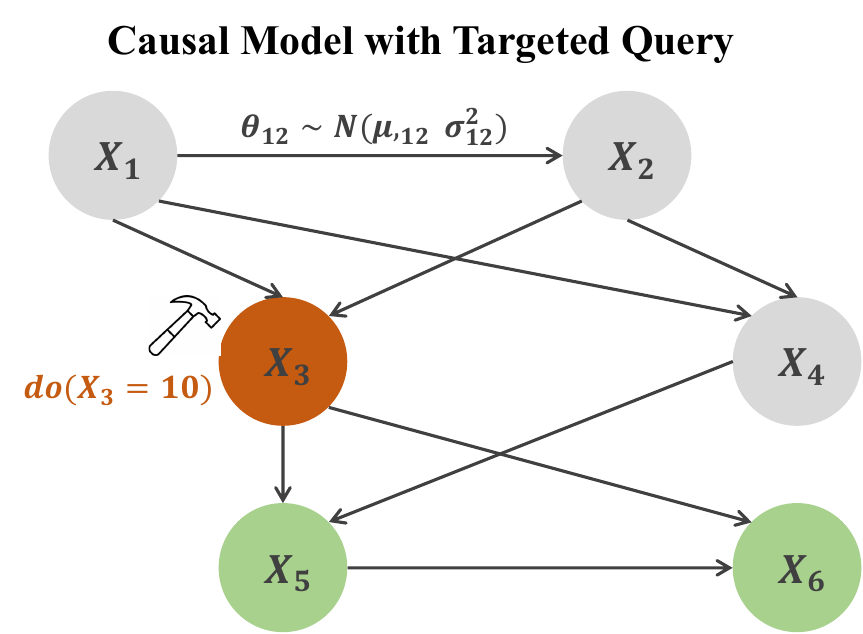}
\includegraphics[align=t,scale=0.3]{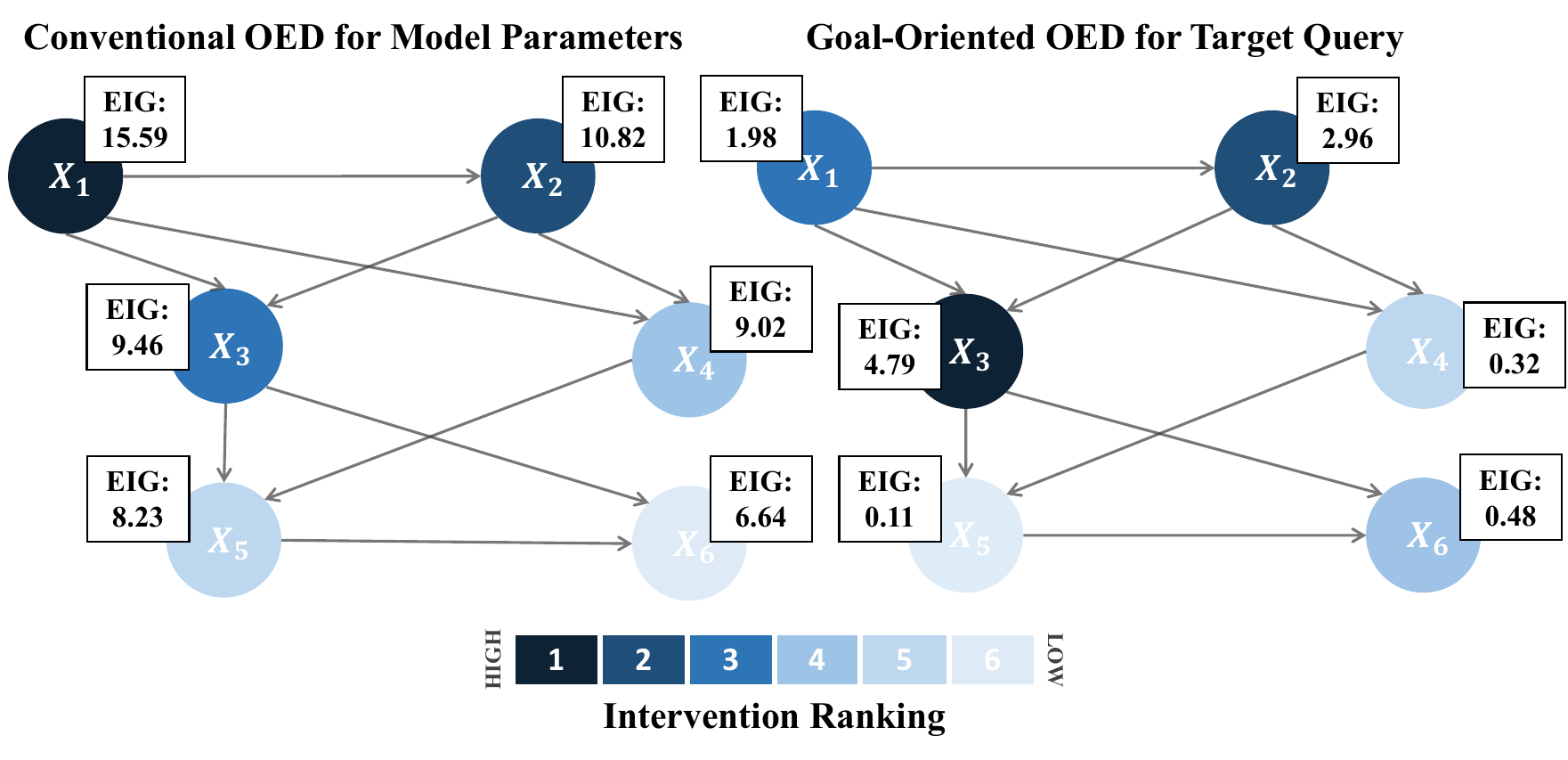}
    \caption{
    Illustration of goal-oriented versus conventional BOED for causal learning.
    \textit{Left}: A linear Gaussian SCM with six nodes; the experimental goal is to estimate the causal effect of the intervention $\text{do}(X_3=10)$ on node $X_5$ and $X_6$. \textit{Middle}: Conventional BOED selects interventions that maximize EIG over all model parameters, %
    resulting in $X_1$ and $X_2$ being selected as the best. \textit{Right}: Our GO-CBED approach selects interventions by directly maximizing EIG for the specific causal query, leading to a different intervention that prioritizes nodes most relevant to the query, i.e., $X_3$ and $X_2$.}
    \label{fig:figure1_example}
\end{figure}

To address these challenges, we propose \textbf{\underline{G}oal-\underline{O}riented \underline{C}ausal \underline{B}ayesian \underline{E}xperimental \underline{D}esign (GO-CBED)}, a novel framework for sequential, non-myopic causal experimental design that:
\begin{itemize}
\item \textbf{Directly targets user-defined causal queries}, using a variational lower-bound estimator~\citep{poole2019variational,Baber_04_bound} to efficiently approximate the EIG on these specific quantities of interest (QoIs); 
\item \textbf{Plans non-myopically} across full experimental sequences via a learned RL policy;
\item \textbf{Enables real-time intervention selection} through an amortized transformer-based policy, trained offline for fast deployment. 
\end{itemize}
Our key contributions include:
  a \textbf{goal-oriented framework} that substantially improves experimental efficiency for specific causal queries;
  a \textbf{sequential, non-myopic 
    strategy} that captures synergies between interventions;
  and empirical results showing that \textbf{GO-CBED outperforms existing methods}.

\section{Related Work}
GO-CBED builds upon and synthesizes advances from three key areas: causal BOED, goal-oriented BOED, and non-myopic sequential BOED.

Early work in causal BOED demonstrated the utility of active interventions for efficiently uncovering causal graph structures, moving beyond passive observational learning~\citep{murphy2001active, tong2001active, cho2016reconstructing, ness2018bayesian, von2019optimal, sussex2021near}. Subsequent research expanded to learning full SCMs, including the selection of both intervention targets and values~\citep{tigas2022interventions, tigas2023differentiable}. More recent work has highlighted the importance of tailoring experiments to specific causal QoIs~\citep{Toth_22_Active}. However, many of these approaches remain myopic---focusing on single-step gains---or are oriented toward global fidelity rather than user-specific causal objectives.

In parallel, the broader BOED literature has seen growing interest in goal-oriented design, where experiments are optimized for their utility to downstream tasks~\citep{Attia2018, Wu_21_Efficient, Zhong2024, Chakraborty2024}. These methods have shown substantial benefits in predictive settings, particularly with complex nonlinear models. However, they generally do not address the unique challenges of causal inference, including the interventional nature of learning and the structural constraints of SCMs.

Recognizing the limitations of greedy approaches, non-myopic sequential BOED seeks to optimize entire experimental trajectories rather than one step at a time. Approaches based on amortized policy learning and RL~\citep{Foster_21_DAD, Blau_22_RL, Shen_23_Bayesian, Shen2023b} have shown promise in this area. Yet in the causal setting, non-myopic strategies often focus solely on structure learning~\citep{Annadani_24_Amortized, Gao_24_Policy} and do not integrate flexible, user-defined causal goals into the long-term optimization framework.

GO-CBED bridges these domains by introducing a non-myopic, goal-oriented approach to sequential experimental design in causal settings. It enables strategic planning of intervention sequences explicitly optimized to answer user-specified causal queries---such as estimating particular effects or critical mechanisms---within complex SCMs. Unlike prior methods that are goal-oriented but myopic~\citep{Toth_22_Active} or non-myopic but focused on structure learning~\citep{Annadani_24_Amortized, Gao_24_Policy}, GO-CBED unifies both objectives, maximizing long-term utility for causal reasoning. A more comprehensive discussion of related work is provided in Appendix~\ref{sec:further related works}.

\section{Preliminaries}

\paragraph{Structural Causal Models} 
SCMs \citep{pearl2009causality} provide a rigorous mathematical framework for representing and reasoning about cause-effect relationships. An SCM defines a collection of random variables $\boldsymbol{X} = \{X_1, \ldots, X_d\}$, 
structured by a DAG $G := \{\boldsymbol{V}, E\}$. The SCM is denoted as $\boldsymbol{\mathcal{M}} := \{G, \boldsymbol{\theta}\}$, where $G$ encodes the causal structure and $\boldsymbol{\theta}$ parameterizes the causal mechanisms. Each variable $X_i$ is determined by its parents in the graph and an exogenous noise term via a structural equation:
\begin{equation}
X_i = f_i(\boldsymbol{X}_{\mathrm{pa}(i)}, \boldsymbol{\theta}_i; \epsilon_i),  \quad \forall i \in \boldsymbol{V}.
\end{equation}
Here, $\boldsymbol{X}_{\mathrm{pa}(i)}$ denotes the parent variables of $X_i$ in $G$, $f_i$ is a causal mechanism parameterized by $\boldsymbol{\theta}_i$, and $\epsilon_i\sim P_{\epsilon_i}$ is an independent noise variable. The SCM thus defines a joint distribution over $\boldsymbol{X}$, enabling causal reasoning and interventional analysis.

\paragraph{Interventions (Experimental Designs)} 
SCMs support formal reasoning about interventions---i.e., external manipulations to the system. A \textit{perfect} (or \textit{hard}) intervention on a subset of variables $\boldsymbol{X}_{{I}}$, denoted by $\text{do}(\boldsymbol{X}_{{I}} = s_{{I}})$ \citep{pearl2009causality}, replaces the corresponding structural equations 
with fixed values $s_{{I}}$, modifying the data-generation process. This introduces an \textit{interventional SCM}, which leads to a new distribution over the variables. Assuming causal sufficiency and independent noise \citep{spirtes2000causation}, the interventional distribution follows the Markov factorization:
\begin{align}
p(\boldsymbol{X} | \boldsymbol{\mathcal{M}}, \boldsymbol{\xi}) = 
\prod_{j \in \boldsymbol{V} \setminus {I}} 
p(X_j | \boldsymbol{X}_{\text{pa}(j)}, \boldsymbol{\theta}_j, \text{do}(\boldsymbol{X}_{{I}} = s_{{I}})),
\end{align}
where the design variable $\boldsymbol{\xi} := \{{I}, s_{{I}}\}$ encodes both the intervention target ${I}$ and the intervention value $s_{{I}}$.
Interventions form the foundation for both causal discovery (i.e., identifying $G$) and causal reasoning (i.e., estimating interventional effects).

\paragraph{Goal-Oriented Sequential Bayesian Framework}
Conventional causal BOED methods typically follow a two-step procedure: first, learn the full model,
and then use it to answer causal queries.
Such an approach can be inefficient when only a small subset of causal QoIs matter, as it may spend significant resources learning aspects of the model irrelevant to the target queries.

To address this inefficiency, we adopt a goal-oriented perspective: rather than learning the entire model, we design experiments to directly improve our ability to answer specific causal queries. We formalize this using a query function $H$ that maps the causal model $\boldsymbol{\mathcal{M}}$ to the desired quantity $\boldsymbol{z} = H(\boldsymbol{\mathcal{M}}; \epsilon_{\boldsymbol{z}})$, where $\epsilon_{\boldsymbol{z}}$ captures any inherent stochasticity in the query. For example, setting $\boldsymbol{z} = G$ corresponds to causal discovery, while $\boldsymbol{z} = X_i^{\text{do}(X_j = \psi_j)}$ corresponds to estimating the causal effect of setting $X_j = \psi_j$ on $X_i$ from a distribution of possible intervention values $\psi_j \sim p(\psi_j)$.

At experiment stage $t$ of a sequence of $T$ experiments, let the history be $\boldsymbol{h}_t := \{\boldsymbol{\xi}_{1:t}, \boldsymbol{x}_{1:t}\}$, 
where $\boldsymbol{\xi}_{\tau}$ and $\boldsymbol{x}_{\tau}$ denote the design and outcome of the $\tau$-th interventional experiment. The belief over the causal model $\boldsymbol{\mathcal{M}}=\{G,\boldsymbol{\theta}\}$ is updated 
via Bayes' rule:\footnote{When observational data $\mathcal{D}$ is available prior to designing interventions, all distributions are implicitly conditioned on $\mathcal{D}$. See Appendix~\ref{subsec: additional_details_causal_reasoning} for further details.} 
\begin{align}
    p(G | \boldsymbol{h}_{t}) = \frac{p(G | \boldsymbol{h}_{t-1}) \, p(\boldsymbol{x}_t | G, \boldsymbol{h}_{t-1}, \boldsymbol{\xi}_t)}{p(\boldsymbol{x}_t | \boldsymbol{h}_{t-1}, \boldsymbol{\xi}_t)}, 
    \qquad
    p(\boldsymbol{\theta} | G, \boldsymbol{h}_{t}) = \frac{p(\boldsymbol{\theta} | G, \boldsymbol{h}_{t-1}) \, p(\boldsymbol{x}_t | G, \boldsymbol{\theta}, \boldsymbol{h}_{t-1}, \boldsymbol{\xi}_t)}{p(\boldsymbol{x}_t | G, \boldsymbol{h}_{t-1}, \boldsymbol{\xi}_t)},\nonumber
\end{align}
where the marginal likelihood $p(\boldsymbol{x}_t | G, \boldsymbol{h}_{t-1}, \boldsymbol{\xi}_t)$ is computed by integrating over $\boldsymbol{\theta}$.
However, our primary interest lies not in inferring the full model $\boldsymbol{\mathcal{M}}$, but in updating beliefs about the target query $\boldsymbol{z}$.
This is captured by the posterior-predictive distribution:
\begin{equation}
\begin{aligned}
    p(\boldsymbol{z} | \boldsymbol{h}_{t}) = \sum_G \int &p(\boldsymbol{z} | G, \boldsymbol{\theta}, \boldsymbol{h}_{t}) \, p(G | \boldsymbol{h}_{t}) \, p(\boldsymbol{\theta} | G, \boldsymbol{h}_{t}) \, \mathrm{d}\boldsymbol{\theta}.
\end{aligned}
\end{equation}

\section{Goal-Oriented Sequential Causal Bayesian Experimental Design}
\label{sec:go_cbed_methods}

\paragraph{Problem Statement} GO-CBED seeks an optimal policy $\pi: \boldsymbol{h}_{t-1} \rightarrow \boldsymbol{\xi}_t$ that maximizes the EIG on the target causal QoI $\boldsymbol{z}$ over a sequence of $T$ experiments:
\begin{equation}
\pi^{\ast}  \in   
\argmax_{\pi} 
    \,\, \left\{ \mathcal{I}_T(\pi) := \mathbb{E}_{p(\boldsymbol{\mathcal{M}})p(\boldsymbol{h}_T | \boldsymbol{\mathcal{M}}, \pi) p(\boldsymbol{z}|  \boldsymbol{\mathcal{M}}} )\Bigg[ \log \frac{p(\boldsymbol{z} | \boldsymbol{h}_T)}{p(\boldsymbol{z} )} \Bigg] \right\},
    \label{eq:total_eig_orig_form}
\end{equation}
subject to the constraint that designs following the policy: $\boldsymbol{\xi}_t=\pi(\boldsymbol{h}_{t-1})$ for all $t$.
This formulation is \textit{goal-oriented}, as it directly targets EIG on specific causal QoIs rather than the full model, and \textit{non-myopic}, as it optimizes the entire sequence of interventions rather than selecting each greedily.
An equivalent formulation based on incremental (stage-wise) EIG after each experiment is also possible; see Appendix \ref{subsection: Total_EIG} for details. 

The EIG on $\boldsymbol{z}$ defined
in \eqref{eq:total_eig_orig_form}, 
$\mathcal{I}_{T}$, is also the mutual information between $\boldsymbol{z}$ and $\boldsymbol{h}_T$.
When $\boldsymbol{z}$ is a bijective function of $\boldsymbol{\mathcal{M}}$, maximizing EIG on $\boldsymbol{z}$ is equivalent to maximizing it on $\boldsymbol{\mathcal{M}}$ \citep{Bernardo1979}. 
However, when $\boldsymbol{z}$ is not invertible with respect to $\boldsymbol{\mathcal{M}}$, directly maximizing EIG on $\boldsymbol{z}$ becomes more efficient. 
It avoids expending effort on irrelevant parts of $\boldsymbol{\mathcal{M}}$, reducing both computational and experimental costs---especially beneficial when dealing with large causal graphs or tight intervention budgets.

\subsection{Variational Lower Bound}

Evaluating and optimizing the EIG in \eqref{eq:total_eig_orig_form} requires estimating the posterior density $p(\boldsymbol{z} | \boldsymbol{h}_T)$, which is generally intractable for complex causal models. To address this, we adopt a \textit{variational approach} that approximates the posterior using $q_{\boldsymbol{\lambda}}(\boldsymbol{z} |  \pi, f_{\boldsymbol{\phi}}(\boldsymbol{h}_T))$, where $\boldsymbol{\lambda}$ is the variational parameter and $f_{\boldsymbol{\phi}}$ is a learned embedding of the historical interventional data:
\begin{align}
    \mathcal{I}_{T; \, L}(\pi; \boldsymbol{\lambda},\boldsymbol{\phi}) := \mathbb{E}_{p(\boldsymbol{\mathcal{M}})p(\boldsymbol{h}_T| \boldsymbol{\mathcal{M}}, \pi) p(\boldsymbol{z}|  \boldsymbol{\mathcal{M}}) }  \Bigg[ \log \frac{q_{\boldsymbol{\lambda}}(\boldsymbol{z} |  f_{\boldsymbol{\phi}}(\boldsymbol{h}_T))}{p(\boldsymbol{z})} \Bigg],
    \label{eqn: EIG_lower_bound}
\end{align}
subject to $\boldsymbol{\xi}_t=\pi(\boldsymbol{h}_{t-1})$ for all $t$.

\begin{theorem}[Variational Lower Bound] 
For any policy $\pi$, variational  parameter $\boldsymbol{\lambda}$, and embedding parameter $\boldsymbol{\phi}$, the EIG satisfies $\mathcal{I}_{T}(\pi) \geq  \mathcal{I}_{T; \, L}(\pi; \boldsymbol{\lambda}, \boldsymbol{\phi}) $. The bound is tight if and only if $p(\boldsymbol{z} |\boldsymbol{h}_T) = q_{\boldsymbol{\lambda}}(\boldsymbol{z} | f_{\boldsymbol{\phi}}(\boldsymbol{h}_T))$ for all $\boldsymbol{z}$ and $\boldsymbol{h}_T$.
\label{thm:lower_bound}
\end{theorem}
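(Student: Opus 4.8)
The plan is to show that the difference $\mathcal{I}_T(\pi) - \mathcal{I}_{T;\,L}(\pi;\boldsymbol{\lambda},\boldsymbol{\phi})$ is exactly an expected Kullback--Leibler divergence, and then invoke its non-negativity. Since both $\mathcal{I}_T$ and $\mathcal{I}_{T;\,L}$ are expectations under the same joint sampling distribution $p(\boldsymbol{\mathcal{M}})\,p(\boldsymbol{h}_T\mid\boldsymbol{\mathcal{M}},\pi)\,p(\boldsymbol{z}\mid\boldsymbol{\mathcal{M}})$ and share the $-\log p(\boldsymbol{z})$ term, subtracting them cancels the prior contribution and leaves
\[
\mathcal{I}_T(\pi) - \mathcal{I}_{T;\,L}(\pi;\boldsymbol{\lambda},\boldsymbol{\phi})
= \mathbb{E}\!\left[\log\frac{p(\boldsymbol{z}\mid\boldsymbol{h}_T)}{q_{\boldsymbol{\lambda}}(\boldsymbol{z}\mid f_{\boldsymbol{\phi}}(\boldsymbol{h}_T))}\right],
\]
with the expectation taken under that same distribution.

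First I would observe that the integrand depends on the sampled triple $(\boldsymbol{\mathcal{M}},\boldsymbol{h}_T,\boldsymbol{z})$ only through $(\boldsymbol{h}_T,\boldsymbol{z})$. Marginalizing out $\boldsymbol{\mathcal{M}}$ therefore replaces the sampling measure by the induced joint law $p(\boldsymbol{z},\boldsymbol{h}_T\mid\pi)$, whose conditional on $\boldsymbol{h}_T$ is by construction precisely the posterior-predictive $p(\boldsymbol{z}\mid\boldsymbol{h}_T)$ appearing in the numerator. Factoring $p(\boldsymbol{z},\boldsymbol{h}_T\mid\pi)=p(\boldsymbol{h}_T\mid\pi)\,p(\boldsymbol{z}\mid\boldsymbol{h}_T)$ and integrating over $\boldsymbol{z}$ first yields
\[
\mathcal{I}_T(\pi) - \mathcal{I}_{T;\,L}(\pi;\boldsymbol{\lambda},\boldsymbol{\phi})
= \mathbb{E}_{p(\boldsymbol{h}_T\mid\pi)}\!\Big[\, D_{\mathrm{KL}}\big(p(\boldsymbol{z}\mid\boldsymbol{h}_T)\,\big\|\,q_{\boldsymbol{\lambda}}(\boldsymbol{z}\mid f_{\boldsymbol{\phi}}(\boldsymbol{h}_T))\big)\Big].
\]

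The inequality then follows immediately: each inner KL divergence is non-negative by Gibbs' inequality (equivalently, by Jensen's inequality applied to the convex function $-\log$), so the outer expectation is non-negative and $\mathcal{I}_T(\pi)\geq\mathcal{I}_{T;\,L}(\pi;\boldsymbol{\lambda},\boldsymbol{\phi})$. For tightness, the expected KL is zero if and only if the integrand vanishes for $p(\boldsymbol{h}_T\mid\pi)$-almost every $\boldsymbol{h}_T$; since a KL divergence equals zero exactly when its two arguments coincide almost everywhere, this is equivalent to $p(\boldsymbol{z}\mid\boldsymbol{h}_T)=q_{\boldsymbol{\lambda}}(\boldsymbol{z}\mid f_{\boldsymbol{\phi}}(\boldsymbol{h}_T))$ for (almost) all $\boldsymbol{h}_T$ and all $\boldsymbol{z}$, matching the stated condition.

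The cancellation and refactoring are routine; the one point requiring care is the marginalization step---verifying that integrating $\boldsymbol{\mathcal{M}}$ out of the sampling distribution genuinely reproduces the Bayesian posterior-predictive $p(\boldsymbol{z}\mid\boldsymbol{h}_T)$ sitting in the numerator, so that the density ratio collapses to a bona fide KL divergence rather than an arbitrary ratio. I would also flag the mild measure-theoretic caveat that the bound still holds (trivially, as $+\infty$) whenever $q_{\boldsymbol{\lambda}}$ fails to dominate $p(\boldsymbol{z}\mid\boldsymbol{h}_T)$, so no absolute-continuity assumption is needed for the inequality itself.
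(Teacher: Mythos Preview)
Your proof is correct and follows essentially the same route as the paper's: subtract the two expectations to cancel the prior term, marginalize out $\boldsymbol{\mathcal{M}}$ from the joint, refactor as $p(\boldsymbol{h}_T\mid\pi)\,p(\boldsymbol{z}\mid\boldsymbol{h}_T)$, and recognize the result as an expected KL divergence. Your additional remarks on the marginalization step, the almost-everywhere qualifier in the tightness condition, and the absolute-continuity caveat are all sound refinements that the paper leaves implicit.
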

A proof is provided in Appendix \ref{subsec: variational_EIG_lower_bound}. 
Since $p(\boldsymbol{z})$ is independent of $\pi$, $\boldsymbol{\lambda}$, and $\boldsymbol{\phi}$, it can be omitted from the optimization statement without affecting the argmax. Thus, maximizing the EIG lower bound reduces to maximizing the \textit{prior-omitted} EIG bound:
\begin{equation}
    \pi^{\ast}, \boldsymbol{\lambda}^{\ast}, \boldsymbol{\phi}^{\ast} \in \argmax_{\pi, \boldsymbol{\lambda}, \boldsymbol{\phi}} \,\, \bigg\{\mathcal{R}_{T; L}(\pi; \boldsymbol{\lambda}, \boldsymbol{\phi}) := \mathbb{E}_{p(\boldsymbol{\mathcal{M}})p(\boldsymbol{h}_T|\boldsymbol{\mathcal{M}}, \pi)p(\boldsymbol{z}|  \boldsymbol{\mathcal{M}})} \big[ \log q_{\boldsymbol{\lambda}}(\boldsymbol{z} |f_{\boldsymbol{\phi}}(\boldsymbol{h}_T)) \big]\bigg\},
    \label{eq:simplified_objective}
\end{equation}
where 
$\mathcal{R}_{T; L}(\pi; \boldsymbol{\lambda}, \boldsymbol{\phi}) \leq \mathcal{R}_{T}(\pi) := \mathbb{E}_{p(\boldsymbol{\mathcal{M}})p(\boldsymbol{h}_T|\boldsymbol{\mathcal{M}}, \pi)p(\boldsymbol{z}|  \boldsymbol{\mathcal{M}})} [ \log p(\boldsymbol{z} |\boldsymbol{h}_T) ]$ is a lower bound to the prior-omitted EIG $\mathcal{R}_{T}(\pi)$.  See Appendix \ref{subsec: NMC_for_EIG} for additional information on $\mathcal{R}_{T; L}(\pi; \boldsymbol{\lambda}, \boldsymbol{\phi})$.

\subsection{Variational Posteriors and Policy Network}

Having established the theoretical foundation of GO-CBED, we now describe its implementation. Our approach comprises two key components: (1) variational posteriors for establishing the EIG lower bound,
and (2) a policy network that guides the intervention selection process.

\paragraph{Variational Posteriors} 
While GO-CBED supports arbitrary causal queries, we focus on two fundamental tasks that form the basis of our experimental evaluation: causal reasoning 
(i.e., estimating interventional effects) 
and causal discovery (i.e., learning graph structure).

For causal reasoning tasks, where the query takes the form $\boldsymbol{z} = X_i^{\text{do}(X_j = \psi_j)}$, the posterior distribution $p(\boldsymbol{z} | \boldsymbol{h}_T)$
is often complex and multimodal due to the structural uncertainty---different causal graphs can imply different causal effects for the same intervention. To capture this complexity, we parameterize the variational posterior $q_{\boldsymbol{\lambda}}(\boldsymbol{z} | f_{\boldsymbol{\phi}}(\boldsymbol{h}_T))$ using normalizing flows (NFs), which transform a Gaussian base distribution into a flexible target distribution via a series of invertible mappings, while enabling efficient density estimation. Specifically, we use the Real NVP architecture \citep{Dinh_16_Density} and follow the implementation strategy of \citet{Dong_25_vOEDNFs}. Details are provided in Appendix \ref{subsection: NFs details}.

For causal discovery tasks, where the query is the graph itself, $\boldsymbol{z} = G$, we model the posterior over graph structures using an independent Bernoulli distribution for each potential edge \citep{Lorch_22_Amortized}:
\begin{align}
    q_{\boldsymbol{\lambda}}(G|f_{\boldsymbol{\phi}}(\boldsymbol{h}_T)) = \prod_{i,j}  q_{\boldsymbol{\lambda}}(G_{i,j} | f_{\boldsymbol{\phi}}(\boldsymbol{h}_T) ),
    \label{eqn: Bernoulli}
\end{align}
where each $q_{\boldsymbol{\lambda}}(G_{i,j}|\cdot) \sim \text{Bernoulli}(\boldsymbol{\lambda}_{i,j})$. This parameterization allows efficient modeling of the posterior over DAG structures, while maintaining scalability and differentiability.
\begin{wrapfigure}{R}{0.57\textwidth}
\vspace*{-0.3in}
\begin{minipage}{0.57\textwidth}
\begin{algorithm}[H]
\caption{The GO-CBED algorithm.}
\label{alg:GO-CBED}
\begin{algorithmic}[1]
\STATE{\textbf{Input}: $H$, $p(\psi)$; prior $p(G)$, $p(\boldsymbol{\theta}| G)$; likelihood $p(\boldsymbol{x}_t|G, \boldsymbol{\theta}, \boldsymbol{\xi}_t)$; number of experiments $T$};
\STATE{Initialize policy network parameters $\boldsymbol{\gamma}$, variational parameters $\boldsymbol{\lambda}$, embedding parameters $\boldsymbol{\phi}$; 
}
\FOR{$l=1,\dots,n_{\text{step}}$} 
\STATE{Simulate $n_{\text{env}}$ samples of $G$, $\boldsymbol{\theta}$, $\psi$ and $\boldsymbol{z}$;}
\FOR{ $t=0,\dots, T$,}
\STATE{Compute $\boldsymbol{\xi}_t = \pi(\boldsymbol{h}_{t-1})$, then sample $\boldsymbol{x}_t \sim p(\boldsymbol{x}_t|G,\boldsymbol{\theta}, \boldsymbol{\xi}_t)$;}
\ENDFOR
\STATE{Update $\boldsymbol{\gamma}$, $\boldsymbol{\lambda}$, and $\boldsymbol{\phi}$ following gradient ascent, where gradient obtained from auto-grad on $\mathcal{R}_{T; L}$;}
\ENDFOR
\STATE{\textbf{Output}: Optimal $\pi^*$ parameterized by $\boldsymbol{\gamma}^*$, and $\boldsymbol{\lambda}^*$, and $\boldsymbol{\phi}^*$;}
\end{algorithmic}
\end{algorithm}
\end{minipage}
\vspace*{-0.2in}
\end{wrapfigure}
\paragraph{Policy Network}
We represent the policy $\pi$ using a neural network with parameters $\boldsymbol{\gamma}$. The policy network selects the next intervention by mapping the history $\boldsymbol{h}_{t-1}$ to a design $\boldsymbol{\xi}_t$ at stage $t$. The architecture is designed to satisfy two symmetry properties that have been shown to improve performance \cite{Annadani_24_Amortized}: permutation invariance across history samples and permutation equivariance across variables. 

The network is composed of $L$ transformer layers that alternate between attention over variable and observation dimensions. This alternating 
structure enables rich and efficient information flow across the entire history of interventions and outcomes. The final embedding is passed through two output heads: one that produces the intervention targets $I_t$, using the Gumbel-softmax trick to enable differentiability for discrete variables, and the other predicts the corresponding intervention values $s_{I_t}$. The architecture is illustrated in Appendix~\ref{subsec: Hyperparam_Policy_and_network}.

\paragraph{Training Procedure}
Algorithm~\ref{alg:GO-CBED} outlines our training procedure, which jointly optimizes the policy parameters and variational parameters by maximizing the variational lower bound.
In each training iteration, we sample causal models $\boldsymbol{\mathcal{M}}=(G, \boldsymbol{\theta})$ from the prior, derive the target QoIs $\boldsymbol{z}$, simulate intervention trajectories, and update all network parameters via gradient ascent. 
At deployment time, only forward passes through the policy network are required, eliminating the need for online Bayesian inference. This enables real-time decision-making with constant computational complexity, independent of experiment sequence length.

\section{Numerical Results}
\label{sec:exp_results}

Our numerical experiments demonstrate how GO-CBED advances causal experimental design through goal-oriented optimization. We begin in Section~\ref{sec: motivate_example} to recap the motivating example from Figure~\ref{fig:figure1_example} that illustrates the fundamental advantage of targeting specific causal queries over full model learning.
We then focus on two key causal tasks: Section~\ref{sec: eval_cr} examines causal reasoning, where we evaluate performance in estimating targeted causal effects across both synthetic causal models and semi-synthetic gene regulatory networks derived from the Dialogue for Reverse Engineering Assessments and Methods (DREAM) benchmarks~\citep{greenfield2010dream4}; Section~\ref{sec:eval_csl} then turns to causal discovery, comparing GO-CBED against existing causal BOED baselines on similar synthetic and semi-synthetic settings.

\subsection{Motivating Example with Fixed Graph Structure}
\label{sec: motivate_example}

\begin{wrapfigure}[21]{r}{0.65\linewidth}  %
\vspace*{-0.2in}
    \centering
    \includegraphics[width=\linewidth]{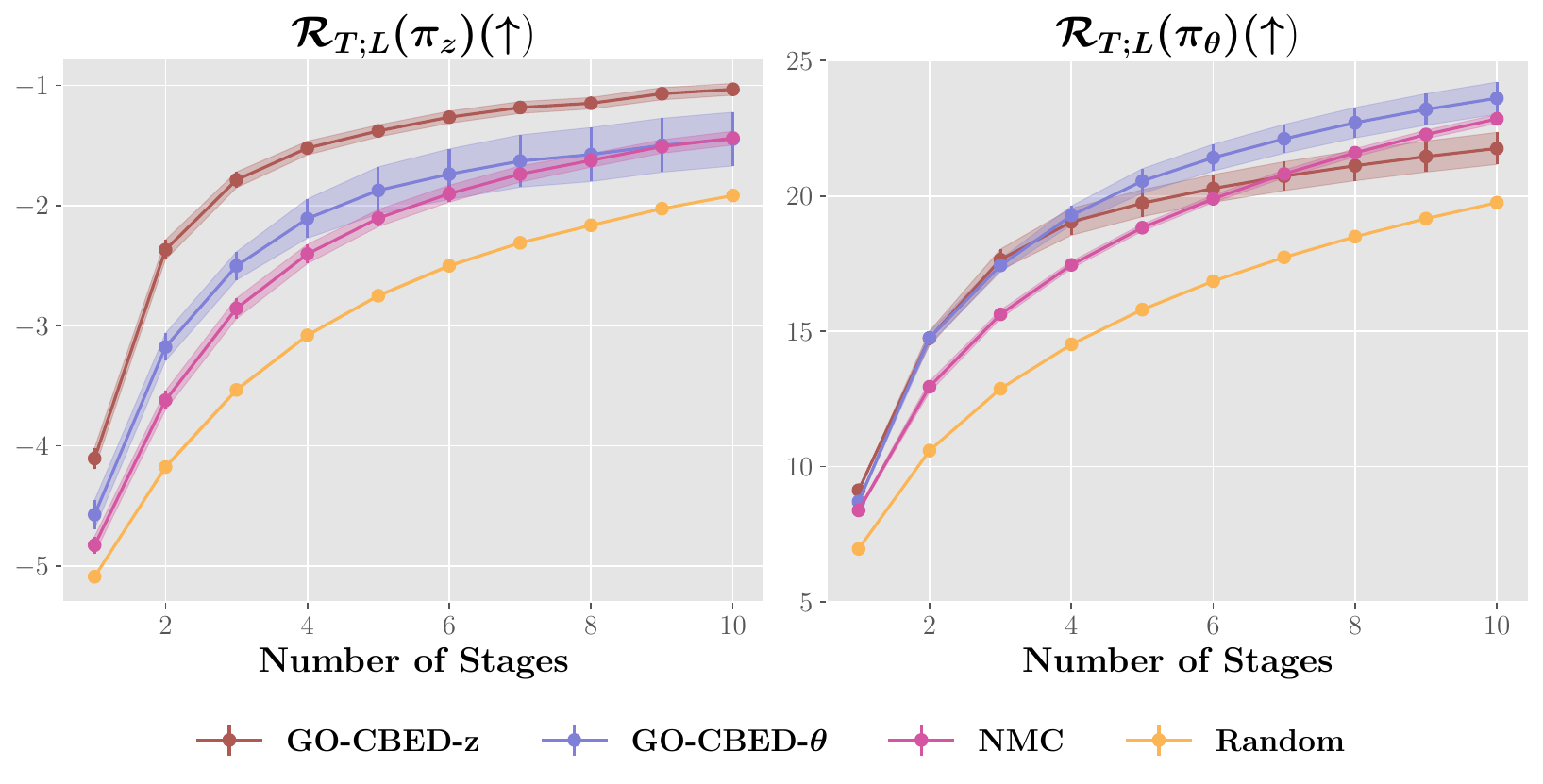}
    \caption{Performance comparison of policies trained for $T=10$, evaluated across different stage lengths. \textit{Left}: Performance on causal query $\boldsymbol{z} = \{\boldsymbol{X_5, X_6}\,|\,\text{do}(X_3=10)\}$. \textit{Right}: Performance on model parameters $\boldsymbol{z} = \{\boldsymbol{\theta} \setminus \boldsymbol{\theta}_{\text{pa}(3)}$\}. While GO-CBED-$\boldsymbol{\theta}$ achieves higher EIG on the task centering parameters (right), it performs significantly worse than GO-CBED-$\boldsymbol{z}$ on the causal reasoning task (left). 
    Shaded regions represent $\pm 1$ standard error across 4 random seeds.
}
    \label{fig: Figure_toy1}
\end{wrapfigure}
We first evaluate the benefits of goal-oriented policies on the motivating example with a fixed graph structure in Figure~\ref{fig:figure1_example}. This setup assumes a linear-Gaussian relationship between variables, allowing for analytical posterior computation and accurate EIG estimation. 
We compare four policies: \textbf{GO-CBED}-$\boldsymbol{z}$, which is optimized for the specific causal query; \textbf{GO-CBED}-$\boldsymbol{\theta}$, which targets model parameters; \textbf{NMC}, a baseline that uses the nested Monte Carlo (NMC) estimator for the prior-omitted EIG on QoIs \citep{Toth_22_Active}; and \textbf{Random}, which selects both intervention targets and values uniformly at random.  
We evaluate their performance on the prior-omitted EIG (or lower bound) for $\boldsymbol{z}$. 
Full experiment details can be found in Appendix \ref{subsec: case1_setups}.

Figure \ref{fig: Figure_toy1} reveals a key insight: although GO-CBED-$\boldsymbol{\theta}$ achieves higher EIG on model parameters, its performance on the actual causal query is substantially worse than that of GO-CBED-$\boldsymbol{z}$. This supports our central argument---when the goal is to answer specific causal queries, policies that directly target those queries are significantly more efficient than those optimized for general model learning. Moreover, GO-CBED's variational formulation consistently outperforms the sampling-based NMC. This advantage is especially pronounced when the inner-loop sample size in NMC is small, where the estimator suffers from high bias (see Appendix \ref{subsec: spider_explain}).

\subsection{Causal Reasoning Tasks}
\label{sec: eval_cr}

We evaluate GO-CBED's ability to design interventions that maximize EIG with respect to specific causal queries, now no longer fixing the graph structure. We compare three policies: \textbf{GO-CBED}-$\boldsymbol{z}$ optimized directly for causal queries, \textbf{GO-CBED}-$G$ trained for causal discovery, and \textbf{Random} selection. 
Additional experiment details are provided in Appendix~\ref{subsec: Synthetic_cases_setup}.

\paragraph{Synthetic SCMs}
Figure~\ref{fig: causal_reasoning_synthetic} compares performance using Erd\"{o}s--R\'{e}nyi (ER) and Scale-Free (SF) graph priors with linear and nonlinear mechanisms, detailed in Appendix \ref{subsec: Synthetic_cases_setup}. In linear settings,  GO-CBED-$\boldsymbol{z}$ and GO-CBED-$G$ perform similarly, indicating that causal structure alone is often sufficient to accurate estimate simple causal effects. However, in nonlinear settings, GO-CBED-$\boldsymbol{z}$ significantly outperforms GO-CBED-$G$, despite the latter's strong performance on causal discovery tasks (see Appendix \ref{subsec: Synthetic_cases_setup}). These results indicate a key insight: as the complexity of causal mechanisms increases, directly targeting causal queries becomes increasingly advantageous. Moreover, in nonlinear mechanisms parameterized by neural networks, the dimensionality of the weights $\boldsymbol{\theta}$ is large and graph-dependent, making it challenging to generate sufficient samples to effectively tighten the EIG lower bound for full graphs and parameters. These are precisely the advantages that goal-oriented design aims to provide.

\begin{figure}[htbp]
    \centering
    \includegraphics[width=\linewidth]{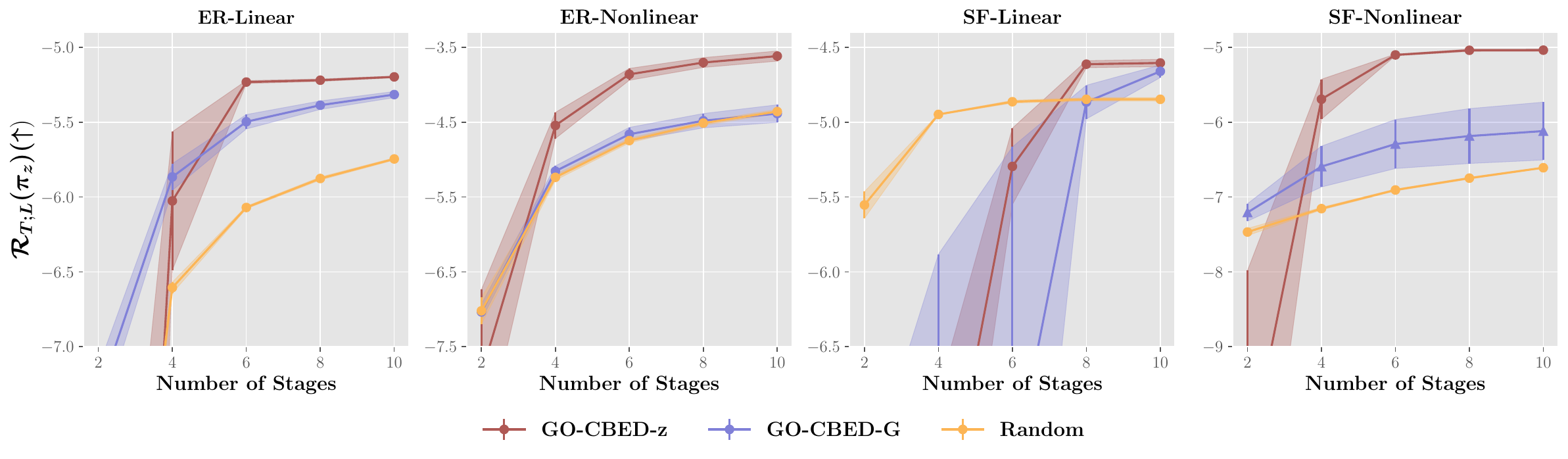}
    \caption{Performance comparison of policies trained for $T=10$ on causal queries, using ER and SF graph priors with linear and nonlinear mechanisms (3 interventions per stage, $d=10$). While GO-CBED-$G$---which targets structure learning---is a natural baseline, it underperforms on causal queries, particularly in nonlinear settings. In contrast,  GO-CBED-$\boldsymbol{z}$, which directly targets causal QoI, consistently achieves higher EIG, especially under nonlinear mechanisms. Shaded regions represent $\pm 1$ standard error over 4 random seeds.}
    \label{fig: causal_reasoning_synthetic}
\end{figure}

\paragraph{Semi-Synthetic Gene Regulatory Networks}

\begin{wrapfigure}[27]{r}{0.5\linewidth}  %
    \vspace*{-0.2in}
    \centering
    \includegraphics[width=\linewidth]{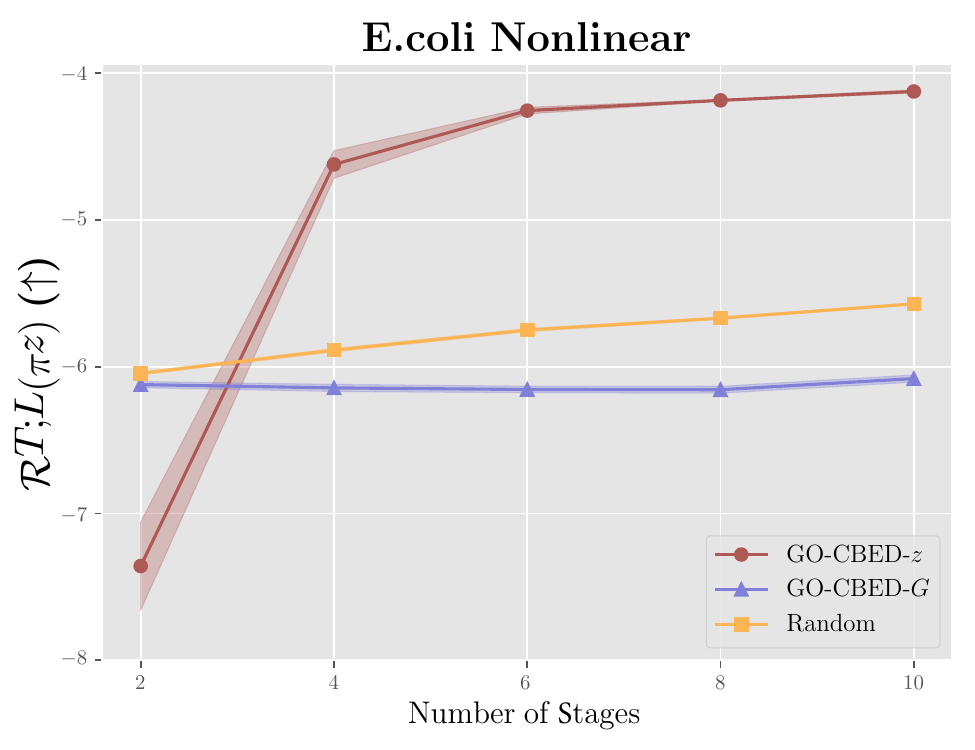}
    \caption{Performance comparison of policies trained for $T=10$ on \textit{E. coli} gene regulatory networks with nonlinear causal mechanisms ($d=10$). GO-CBED-$\boldsymbol{z}$ performs comparably to baselines in early stages but exhibits rapid improvement after stage 3, ultimately achieving substantially higher EIG than both GO-CBED-$G$ and Random. These results highlight the value of goal-oriented experimental design in realistic biological settings with complex nonlinear causal mechanisms. Shaded regions represent $\pm 1$ standard error across 4 random seeds.}
    \label{fig:ecoli_causal_reasoning}
\end{wrapfigure}
To assess real-world applicability, we evaluate GO-CBED on semi-synthetic gene regulatory networks derived from the DREAM \citep{greenfield2010dream4} benchmarks, detailed in Appendix \ref{subsec: Synthetic_cases_setup}. 
Figure~\ref{fig:ecoli_causal_reasoning} presents results on \textit{E. coli} networks with nonlinear causal mechanisms ($d=10$, $T=10$). 
GO-CBED-$\boldsymbol{z}$, which directly targets causal query, significantly outperforms both GO-CBED-$G$ and Random baselines, especially after the initial stages of intervention. In addition, GO-CBED-$G$ performs worse than Random despite achieving better causal discovery performance (see Appendix~\ref{subsec: Synthetic_cases_setup}), indicating that recovering the causal graph does not necessarily provide good uncertainty reduction for specific causal queries.
This performance gap on biologically-inspired networks has important practical implications. In real biological research, experimental resources are often limited, and researchers typically seek to answer specific causal questions rather than infer the entire network structure. GO-CBED's ability to efficiently target such queries highlights its promise for applications such as gene regulatory network analysis and drug target identification, where maximizing information about specific causal effects is critical.  We further validate GO-CBED's effectiveness through additional experiments on Yeast networks and with diverse goal specifications in Appendix~\ref{subsec: additional_causal_reasoning_tasks}.
Additional evaluations of GO-CBED's robustness to distributional shifts in observation noise are presented in Appendix~\ref{subsec: distribution_shift}.

\subsection{Causal Discovery Tasks}
\label{sec:eval_csl}

While GO-CBED is primarily designed for general goal-oriented experimental design, we also apply it to specific causal discovery tasks, where the target QoI is the causal graph $\boldsymbol{z} = G$. This enables comparison with existing causal BOED methods specifically designed for structure learning. We consider synthetic settings using ER and SF graph priors, and semi-synthetic settings based on real gene regulatory networks from DREAM \citep{greenfield2010dream4}. In all cases, we simulate linear and nonlinear causal mechanisms with additive noise. See Appendix~\ref{subsec: Synthetic_cases_setup} for more details.

\begin{figure}[t]
    \centering
    \includegraphics[width=\linewidth]{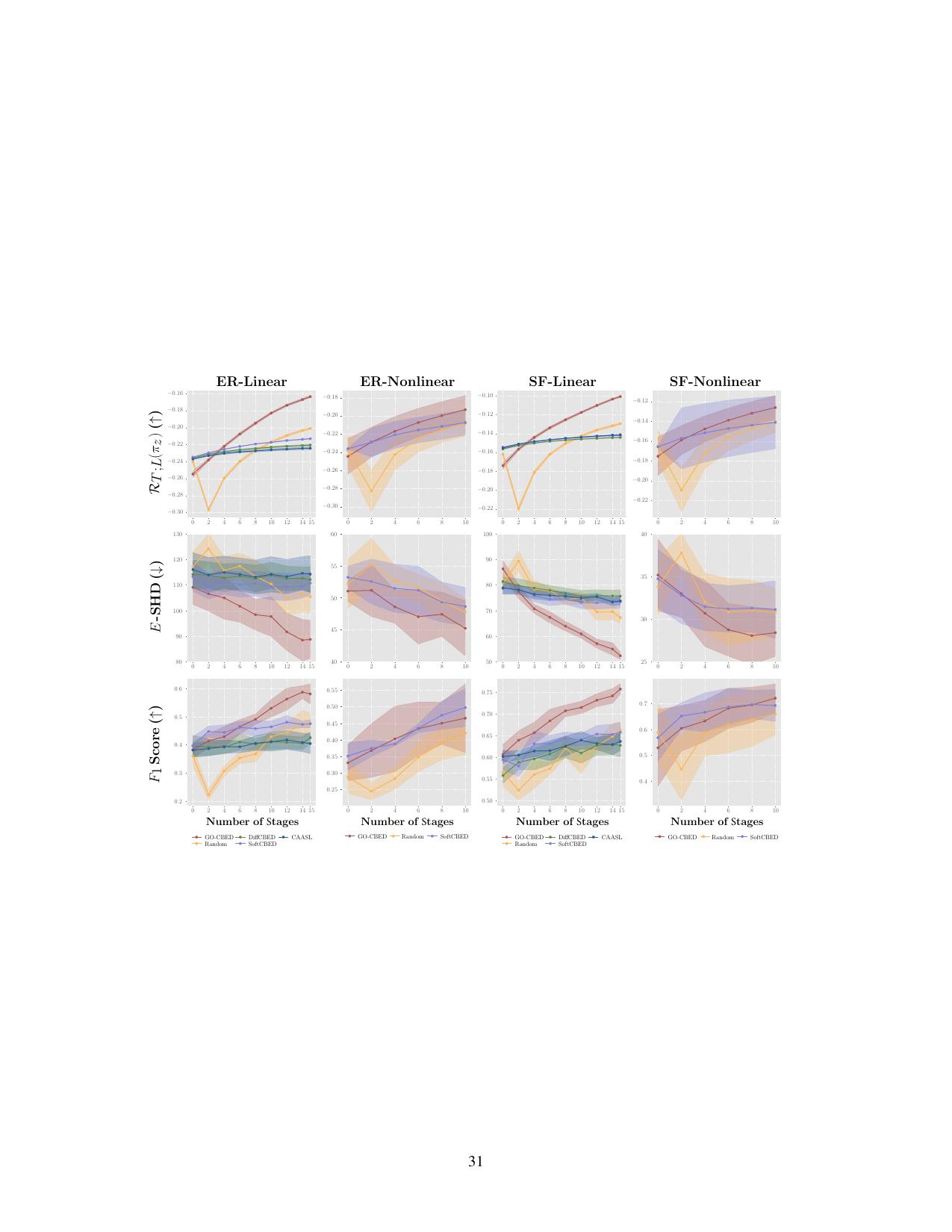}
    \caption{Performance comparison on synthetic SCMs, using ER and SF graph priors with linear and nonlinear mechanisms. Metrics include prior-omitted EIG lower bound $\mathcal{R}_{T; L}$, expected structural Hamming distance $\mathbb{E}$-SHD, and $F_1$-score. 
    GO-CBED performs better or comparatively in terms of uncertainty reduction ($\mathcal{R}_{T; L}$), structural recovery ($\mathbb{E}$-SHD), and structural accuracy ($F_1$-score) compared to all baselines. Shaded regions indicate $\pm 1$ standard error across 10 random seeds.
    }
    \label{fig:synthetic_csl_results}
\end{figure}

\paragraph{Baselines}
We benchmark GO-CBED against four methods: \textbf{CAASL} \citep{Annadani_24_Amortized}, which uses offline RL method with a fixed pre-trained posterior network; \textbf{Random}, which uniformly selects interventions at random; \textbf{Soft-CBED} \citep{tigas2022interventions}, which employs Bayesian optimization for single-step EIG; and \textbf{DiffCBED} \citep{tigas2023differentiable}, which learns a non-adaptive policy through gradient-based optimization.

\paragraph{Metrics} 
We evaluate performance using three metrics: prior-omitted EIG lower bound $\mathcal{R}_{T; L}$; expected structural Hamming distance $\mathbb{E}$-SHD \citep{De_09_Comparison} between posterior graph samples and the ground truth; and $F_1$-score for edge prediction. To ensure a fair comparison across policies, we train a dedicated posterior network for each policy. This isolates the contribution of the policy itself and avoids confounding effects from differing posterior approximation methods. For example, while CAASL relies on a fixed pre-trained posterior network from \cite{Lorch_22_Amortized}, other baselines use DAG-bootstrap~\citep{friedman2013data,hauser2012characterization} for linear SCMs and DiBS~\citep{lorch2021dibs} for nonlinear SCMs. In our evaluation, we adopt the specifically trained posterior networks for inference across all baselines, as they produce higher-quality posteriors than those used in the original works. For completeness, results using each method’s original inference setup are included in Appendix~\ref{subsec: policy_with_original_posterior}. All results are averaged over 10 random seeds.

\paragraph{Synthetic SCMs}
Figure~\ref{fig:synthetic_csl_results} presents GO-CBED's performance using ER and SF graph priors with linear ($d=30$, $T=15$) and nonlinear ($d=20, T=10$) mechanisms. Across all metrics and graph types, GO-CBED consistently outperforms baseline methods, showing especially large gains in $F_1$-scores (reaching up to $0.75$ on SF graphs). Although GO-CBED initially performs comparably to some baselines, it steadily surpasses them as more interventions are collected. This highlights its strength in optimizing long-term information gain rather than short-term or greedy improvements. While the advantage is most prominent in linear settings, GO-CBED still achieves strong performance in the more challenging nonlinear cases. We further evaluate GO-CBED's robustness to distributional shifts in observation noise with additional experiments in Appendix~\ref{subsec: distribution_shift}.

\paragraph{Semi-Synthetic Gene Regulatory Networks}
Figure \ref{fig:ecoli_scm} evaluates GO-CBED on 20-node networks derived from the DREAM \textit{E. coli} gene regulatory benchmark with linear mechanisms. Our method consistently outperforms all baselines across all evaluation metrics. By the final intervention stage, GO-CBED achieves high $\mathcal{R}_{T; L}$ values, low $\mathbb{E}$-SHD scores, and high $F_1$-scores,
indicating accurate recovery of the true causal structure with minimal posterior uncertainty. This strong performance on biologically-inspired networks demonstrates GO-CBED's ability to handle the complex dependencies and noise typically encountered in gene regulatory systems. Additional experiments on Yeast networks and with nonlinear mechanisms presented in Appendix~\ref{subsec: additional_causal_discovery_tasks} further support GO-CBED's effectiveness in biologically relevant settings.

\begin{figure}[t]
    \centering
    \includegraphics[width=\linewidth]{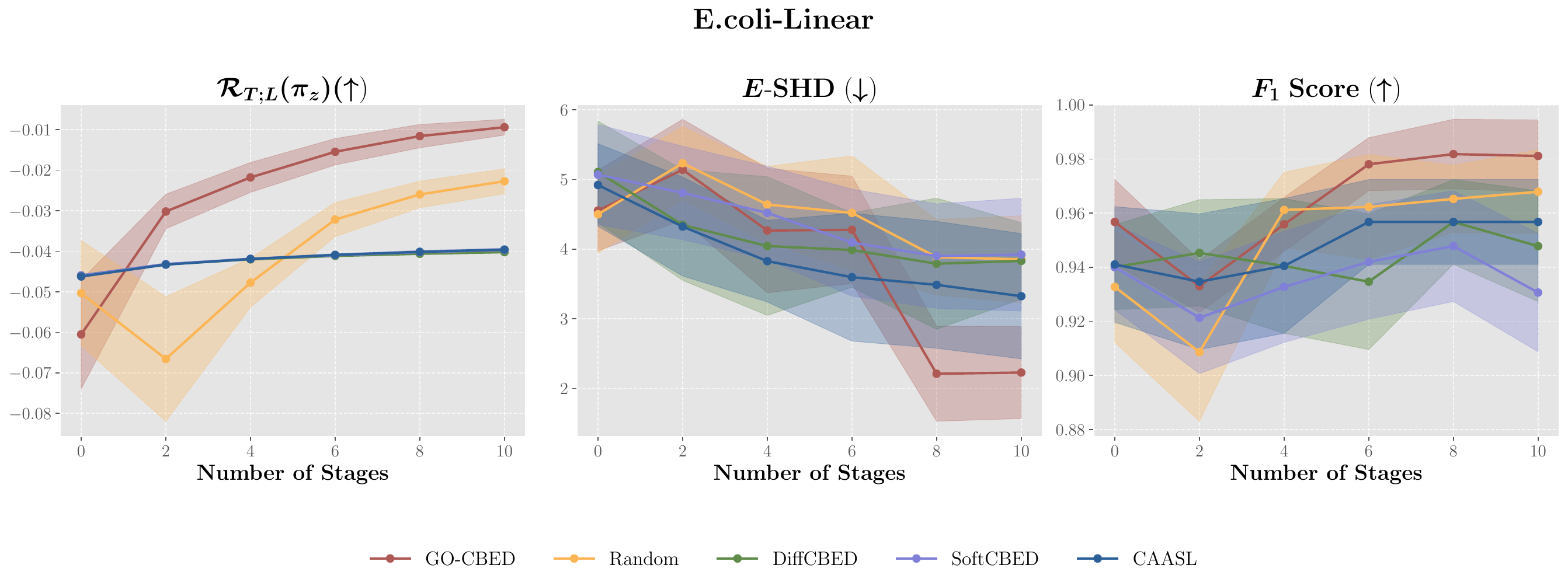}

    \caption{
    GO-CBED outperforms all baselines on semi-synthetic \textit{E. coli} gene regulatory networks ($d = 20$, $T = 10$) with linear mechanisms. Our method achieves: (\textit{Left}) near-zero $\mathcal{R}_{T; L}(\pi_{\boldsymbol{z}}; \boldsymbol{\lambda}, \boldsymbol{\phi})$, (\textit{Middle}) significant lower $\mathbb{E}$-SHD, and (\textit{Right}) superior $F_1$ score for edge prediction. This demonstrates GO-CBED's ability to efficiently identify the true causal structure in biologically-inspired networks, with the variational posterior tightly concentrated around the ground truth after $10$ stages.
    }
    \label{fig:ecoli_scm}
\end{figure}

\section{Discussion}
\label{sec:discussion}

We presented GO-CBED, a goal-oriented Bayesian framework for sequential causal experimental design. Unlike conventional approaches that aim to learn the full causal model, GO-CBED directly maximizes the EIG on specific causal QoIs, enabling more targeted and efficient experimentation.
The framework is both non-myopic, optimizing over entire sequences of interventions, and goal-oriented, focusing on model aspects relevant to the causal query. To overcome the intractability of exact EIG computation, we introduced a variational lower bound, optimized jointly over policy and variational parameters. Our implementation leveraged NFs for flexible posterior approximations and a transformer-based policy network that captures symmetry and structure in the intervention history.
Numerical experiments demonstrated that GO-CBED outperforms baseline methods in multiple causal tasks,
with gains increasing as causal mechanisms become more complex. Crucially, the joint training of intervention policies and variational posteriors enabled adaptive, goal-oriented exploration of the causal model.

\paragraph{Limitations and Future Work}
While GO-CBED demonstrates strong empirical performance, several limitations remain. Its scalability is constrained by the complexity of both the underlying causal models and the neural network architectures. Additionally, its effectiveness depends on the availability of prior knowledge over causal structures and mechanisms.
Future work includes incorporating foundation models as high-fidelity world simulators for offline policy training.  Recent advances in biological foundation models \citep{theodoris2023transfer, cui2024scgpt} offer a promising avenue for simulating complex, realistic causal mechanisms, which could significantly enhance policy learning without relying on costly real-world experimentation. 
Other valuable extensions include generalizing GO-CBED to support multi-target intervention settings and non-differentiable likelihoods, as well as improving policy robustness to changing experimental horizons and dynamic model updates during experimentation.

\bibliography{reference}

\newpage

\newpage
\appendix
\renewcommand{\theequation}{A\arabic{equation}}
\setcounter{equation}{0}
 
\section*{\appendixname}
\addcontentsline{toc}{section}{\appendixname}

\startcontents[sections]
\printcontents[sections]{}{1}{\setcounter{tocdepth}{2}}

\newpage

\section{Theoretical and Numerical Formulations}
\label{section: Theoretical Formulation}

\subsection{Incremental EIG Formulation} 
\label{subsection: Total_EIG}

The incremental EIG on the target query $\boldsymbol{z}$ resulting from an experiment at stage $t$ with design $\boldsymbol{\xi}_t$, given the intervention history $\boldsymbol{h}_{t-1}$, is defined as:
\begin{align}
    \mathcal{I}_t( \boldsymbol{\xi}_t, \boldsymbol{h}_{t-1}) := \mathbb{E}_{p(\boldsymbol{\mathcal{M}}|\boldsymbol{h}_{t-1}) {p(\boldsymbol{x}_t| \boldsymbol{\mathcal{M}}, \boldsymbol{\xi}_t)}  {p(\boldsymbol{z}| \boldsymbol{\mathcal{M}}
    )}} \left[\log  \frac{ p(\boldsymbol{z}|  \boldsymbol{h}_{t}) }{ p(\boldsymbol{z}|  \boldsymbol{h}_{t-1})}\right].
    \label{eqn: intermediate_EIG}
\end{align}

\begin{proposition}
The total EIG of a policy $\pi$ on the target query $\boldsymbol{z}$ over a sequence of $T$ experiments can be written as:
\begin{align}
    \mathcal{I}_{T}(\pi) 
    = \mathbb{E}_{p(  \boldsymbol{h}_T|\pi) } \left[\sum_{t=1}^T I_t( \boldsymbol{\xi}_t, \boldsymbol{h}_{t-1})\right].
\label{thm:total_eig}
\end{align}
\end{proposition}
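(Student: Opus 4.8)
The plan is to exploit a telescoping identity in the log-density ratio and then reduce each resulting term to the incremental EIG through conditional-independence bookkeeping.

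First I would adopt the convention $\boldsymbol{h}_0 := \emptyset$, so that $p(\boldsymbol{z}\mid\boldsymbol{h}_0) = p(\boldsymbol{z})$, and observe that the integrand of $\mathcal{I}_T(\pi)$ telescopes,
\begin{align}
\log\frac{p(\boldsymbol{z}\mid\boldsymbol{h}_T)}{p(\boldsymbol{z})} = \sum_{t=1}^T \log\frac{p(\boldsymbol{z}\mid\boldsymbol{h}_t)}{p(\boldsymbol{z}\mid\boldsymbol{h}_{t-1})}.
\end{align}
Substituting this into the definition of $\mathcal{I}_T(\pi)$ in \eqref{eq:total_eig_orig_form} and invoking linearity of expectation moves the sum outside the expectation, giving $\mathcal{I}_T(\pi) = \sum_{t=1}^T \mathbb{E}_{\mathrm{full}}\left[\log\frac{p(\boldsymbol{z}\mid\boldsymbol{h}_t)}{p(\boldsymbol{z}\mid\boldsymbol{h}_{t-1})}\right]$, where the full expectation is taken under $p(\boldsymbol{\mathcal{M}})\,p(\boldsymbol{h}_T\mid\boldsymbol{\mathcal{M}},\pi)\,p(\boldsymbol{z}\mid\boldsymbol{\mathcal{M}})$.

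Second, for each fixed $t$ I would note that the summand depends on the trajectory only through $(\boldsymbol{z},\boldsymbol{h}_t)$, so the future outcomes $\boldsymbol{x}_{t+1:T}$ and the model $\boldsymbol{\mathcal{M}}$ can be marginalized away. The structural facts I rely on are that the history factorizes as $p(\boldsymbol{h}_T\mid\boldsymbol{\mathcal{M}},\pi)=\prod_{\tau}p(\boldsymbol{x}_\tau\mid\boldsymbol{\mathcal{M}},\boldsymbol{\xi}_\tau)$ with deterministic designs $\boldsymbol{\xi}_\tau=\pi(\boldsymbol{h}_{\tau-1})$, and that $\boldsymbol{z}$ is conditionally independent of the history given $\boldsymbol{\mathcal{M}}$. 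These let me write the joint marginal as
\begin{align}
p(\boldsymbol{z},\boldsymbol{h}_t\mid\pi)=p(\boldsymbol{h}_{t-1}\mid\pi)\int p(\boldsymbol{\mathcal{M}}\mid\boldsymbol{h}_{t-1})\,p(\boldsymbol{x}_t\mid\boldsymbol{\mathcal{M}},\boldsymbol{\xi}_t)\,p(\boldsymbol{z}\mid\boldsymbol{\mathcal{M}})\,\mathrm{d}\boldsymbol{\mathcal{M}},
\end{align}
in which the inner sampling distribution is exactly the one appearing in the definition of $\mathcal{I}_t(\boldsymbol{\xi}_t,\boldsymbol{h}_{t-1})$ in \eqref{eqn: intermediate_EIG}. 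Consequently the $t$-th term equals $\mathbb{E}_{p(\boldsymbol{h}_{t-1}\mid\pi)}[\mathcal{I}_t(\boldsymbol{\xi}_t,\boldsymbol{h}_{t-1})]$.

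Finally, since $\mathcal{I}_t(\boldsymbol{\xi}_t,\boldsymbol{h}_{t-1})$ depends on $\boldsymbol{h}_T$ only through $\boldsymbol{h}_{t-1}$, I can replace $\mathbb{E}_{p(\boldsymbol{h}_{t-1}\mid\pi)}$ by $\mathbb{E}_{p(\boldsymbol{h}_T\mid\pi)}$ (the remaining coordinates integrate out), and then pull the sum back inside the expectation by linearity to reach $\mathcal{I}_T(\pi)=\mathbb{E}_{p(\boldsymbol{h}_T\mid\pi)}[\sum_{t=1}^T\mathcal{I}_t(\boldsymbol{\xi}_t,\boldsymbol{h}_{t-1})]$. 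I expect the main obstacle to be the bookkeeping in the second step: carefully justifying the conditional-independence factorization---in particular that $\boldsymbol{z}$ is independent of $(\boldsymbol{x}_{1:t},\boldsymbol{\xi}_{1:t})$ given $\boldsymbol{\mathcal{M}}$, and that the deterministic policy renders $\boldsymbol{\xi}_t$ measurable with respect to $\boldsymbol{h}_{t-1}$---so that the marginal over $(\boldsymbol{z},\boldsymbol{h}_t)$ coincides with the integrand of $\mathcal{I}_t$ exactly rather than up to an extra reweighting.
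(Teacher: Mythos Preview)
Your proposal is correct and uses essentially the same mechanism as the paper's proof---the telescoping identity $\log\frac{p(\boldsymbol{z}\mid\boldsymbol{h}_T)}{p(\boldsymbol{z})}=\sum_t\log\frac{p(\boldsymbol{z}\mid\boldsymbol{h}_t)}{p(\boldsymbol{z}\mid\boldsymbol{h}_{t-1})}$ together with the conditional-independence fact $p(\boldsymbol{z}\mid\boldsymbol{\mathcal{M}})=p(\boldsymbol{z}\mid\boldsymbol{\mathcal{M}},\boldsymbol{h}_{t-1})$---and the only cosmetic difference is direction: the paper starts from the right-hand side, expands each $\mathcal{I}_t$, forms the joint, and collapses the telescoping sum, whereas you start from $\mathcal{I}_T(\pi)$ and decompose it.
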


\begin{proof}

Beginning from the right-hand side, we have:
\begin{align}
    \nonumber
& \mathbb{E}_{p(\boldsymbol{h}_T| \pi) } \Bigg[\sum_{t=1}^T \mathcal{I}_t( \boldsymbol{\xi}_t, \boldsymbol{h}_{t-1}) \Bigg] \\
    \nonumber
    &=  \mathbb{E}_{p(\boldsymbol{h}_T|\pi) } \Bigg[\sum_{t=1}^T \mathbb{E}_{p(\boldsymbol{\mathcal{M}}|\boldsymbol{h}_{t-1}) {p(\boldsymbol{x}_t| \boldsymbol{\mathcal{M}}, \boldsymbol{\xi}_t)}  {p(\boldsymbol{z}| \boldsymbol{\mathcal{M}}
    )}} \left[\log  \frac{ p(\boldsymbol{z}|  \boldsymbol{h}_{t}) }{ p(\boldsymbol{z}|  \boldsymbol{h}_{t-1})}\right]\Bigg] \\
    &=  \sum_{t=1}^T  \Bigg[\mathbb{E}_{p(\boldsymbol{h}_{t-1}|\pi) p(\boldsymbol{\mathcal{M}}|\boldsymbol{h}_{t-1}) {p(\boldsymbol{x}_t| \boldsymbol{\mathcal{M}}, \boldsymbol{\xi}_t)}  {p(\boldsymbol{z}| \boldsymbol{\mathcal{M}}
    )}}  \left[\log  \frac{ p(\boldsymbol{z}|  \boldsymbol{h}_{t}) }{ p(\boldsymbol{z}|  \boldsymbol{h}_{t-1})}\right]\Bigg] \nonumber \\
    \nonumber
    &=  \sum_{t=1}^T  \Bigg[\mathbb{E}_{p(\boldsymbol{\mathcal{M}},\boldsymbol{x}_t, \boldsymbol{h}_{t-1},\boldsymbol{z}|  \pi)}
    \left[\log  \frac{ p(\boldsymbol{z}|  \boldsymbol{h}_{t}) }{ p(\boldsymbol{z}|  \boldsymbol{h}_{t-1})}\right]\Bigg] \\
    \nonumber 
    &= \sum_{t=1}^T  \Bigg[\mathbb{E}_{p(\boldsymbol{\mathcal{M}},\boldsymbol{h}_{t}|  \pi)p(\boldsymbol{z}| \boldsymbol{\mathcal{M}})} \log p(\boldsymbol{z}|  \boldsymbol{h}_{t})  -   \mathbb{E}_{p(\boldsymbol{\mathcal{M}},\boldsymbol{h}_{t-1}|  \pi)p(\boldsymbol{z}| \boldsymbol{\mathcal{M}})} \log p(\boldsymbol{z}|  \boldsymbol{h}_{t-1}) \Bigg]  \\
    \nonumber
    &= \mathbb{E}_{p(\boldsymbol{\mathcal{M}},\boldsymbol{h}_{T}|  \pi)p(\boldsymbol{z}| \boldsymbol{\mathcal{M}})} \Bigg[ \log \frac{p(\boldsymbol{z}|  \boldsymbol{h}_{T})}{p(\boldsymbol{z})} \Bigg]  \\
    &= \mathcal{I}_{T}(\pi),
\end{align}
where in the third equality, the joint expectation is formed using
$p(\boldsymbol{z}| \boldsymbol{\mathcal{M}}
    ) = p(\boldsymbol{z}| \boldsymbol{\mathcal{M}},
     \boldsymbol{h}_{t-1}
    )$,
and
the fifth equality follows from the cancellation of all terms in the summation except the first and last.

\end{proof}

\subsection{Proof for Theorem~\ref{thm:lower_bound}}
\label{subsec: variational_EIG_lower_bound}
\begin{proof}[Proof for Theorem~\ref{thm:lower_bound}]
Following \eqref{eq:total_eig_orig_form} and \eqref{eqn: EIG_lower_bound}, the difference
\begin{align}
\nonumber
    & \mathcal{I}_{T}(\pi)  -  \mathcal{I}_{T; \, L}(\pi; \boldsymbol{\lambda},\boldsymbol{\phi}) \\
    &= \mathbb{E}_{p(\boldsymbol{\mathcal{M}}  )p(\boldsymbol{h}_T| \boldsymbol{\mathcal{M}}, \pi) p(\boldsymbol{z}|  \boldsymbol{\mathcal{M}})} \Bigg[ \log \frac{p(\boldsymbol{z} |   \boldsymbol{h}_T)}{p(\boldsymbol{z} )} \Bigg]   - \mathbb{E}_{p(\boldsymbol{\mathcal{M}}  )p(\boldsymbol{h}_T|  \boldsymbol{\mathcal{M}}, \pi)p(\boldsymbol{z}|   \boldsymbol{\mathcal{M}})} \Bigg[ \log \frac{q_{\boldsymbol{\lambda}}(\boldsymbol{z} |   f_{\boldsymbol{\phi}}(\boldsymbol{h}_T))}{p(\boldsymbol{z} )} \Bigg] \nonumber\\
    & = \nonumber
    \mathbb{E}_{p(\boldsymbol{\mathcal{M}}  )p(\boldsymbol{h}_T|   \boldsymbol{\mathcal{M}}, \pi)p(\boldsymbol{z}|  \boldsymbol{\mathcal{M}})} \Bigg[ \log \frac{p(\boldsymbol{z} |   \boldsymbol{h}_T)}{q_{\boldsymbol{\lambda}}(\boldsymbol{z} |    f_{\boldsymbol{\phi}}(\boldsymbol{h}_T))} \Bigg] \\
    & = \nonumber
    \mathbb{E}_{p(\boldsymbol{\mathcal{M}},\boldsymbol{h}_T, \boldsymbol{z}|  \pi)} \Bigg[ \log \frac{p(\boldsymbol{z} |   \boldsymbol{h}_T)}{q_{\boldsymbol{\lambda}}(\boldsymbol{z} |    f_{\boldsymbol{\phi}}(\boldsymbol{h}_T))} \Bigg] \\
    & = \nonumber
    \mathbb{E}_{p(\boldsymbol{h}_T, \boldsymbol{z}|  \pi)} \Bigg[ \log \frac{p(\boldsymbol{z} |   \boldsymbol{h}_T)}{q_{\boldsymbol{\lambda}}(\boldsymbol{z} |    f_{\boldsymbol{\phi}}(\boldsymbol{h}_T))} \Bigg] \\
    & = \nonumber
    \mathbb{E}_{p(\boldsymbol{h}_T|  \pi)p(\boldsymbol{z}| \boldsymbol{h}_T)} \Bigg[ \log \frac{p(\boldsymbol{z} |   \boldsymbol{h}_T)}{q_{\boldsymbol{\lambda}}(\boldsymbol{z} |    f_{\boldsymbol{\phi}}(\boldsymbol{h}_T))} \Bigg] \\    & = \mathbb{E}_{p(\boldsymbol{h}_T|  \pi)} \Bigg[ D_{\text{KL}} \Big( p(\boldsymbol{z} |   \boldsymbol{h}_T) \mid\mid q_{\boldsymbol{\lambda}}(\boldsymbol{z} |    f_{\boldsymbol{\phi}}(\boldsymbol{h}_T)) \Big) \Bigg]
    \label{eqn: lower_bound_proof}
\end{align}
is an expectation of a Kullback--Leibler (KL) divergence, which is always non-negative.
Hence, $\mathcal{I}_{T}(\pi) \geq \mathcal{I}_{T; \, L}(\pi; \boldsymbol{\lambda},\boldsymbol{\phi})$ for any $\pi$, $\boldsymbol{\lambda}$, and $\boldsymbol{\phi}$. The bound is tight if and only if the KL divergence equals zero, which occurs when $q_{\boldsymbol{\lambda}}(\boldsymbol{z} |   f_{\boldsymbol{\phi}}(\boldsymbol{h}_T)) = p(\boldsymbol{z} |   \boldsymbol{h}_T)$ for all  $\boldsymbol{z}$ and $\boldsymbol{h}_T$. %
\end{proof}

\subsection{Prior-omitted EIG}
\label{subsec: NMC_for_EIG}

We note that 
\begin{align}
    \mathcal{I}_{T}(\pi) 
    &=  \mathbb{E}_{p(\boldsymbol{\mathcal{M}}  )p(\boldsymbol{h}_T| \boldsymbol{\mathcal{M}}, \pi) p(\boldsymbol{z}|\boldsymbol{\mathcal{M}})} \left[ \log p(\boldsymbol{z} | \boldsymbol{h}_T) \right] - c\nonumber\\
    &= \mathcal{R}_{T}(\pi) -c,
\end{align}
where $c := \mathbb{E}_{p(\boldsymbol{\mathcal{M}}  ) p(\boldsymbol{z}|  \boldsymbol{\mathcal{M}})} [ \log p(\boldsymbol{z}  )]$ is independent of $\pi$. 
Similarly, 
\begin{align}
    \mathcal{I}_{T; \, L}(\pi; \boldsymbol{\lambda},\boldsymbol{\phi}) 
    &=  \mathbb{E}_{p(\boldsymbol{\mathcal{M}}  )p(\boldsymbol{h}_T| \boldsymbol{\mathcal{M}}, \pi) p(\boldsymbol{z}|\boldsymbol{\mathcal{M}})} \left[ \log q_{\boldsymbol{\lambda}}(\boldsymbol{z} |   f_{\boldsymbol{\phi}}(\boldsymbol{h}_T)) \right] - c\nonumber\\
    &= \mathcal{R}_{T; L}(\pi; \boldsymbol{\lambda}, \boldsymbol{\phi}) -c.
\end{align}

\begin{proposition}
For any policy $\pi$, variational  parameter $\boldsymbol{\lambda}$, and embedding parameter $\boldsymbol{\phi}$, the prior-omitted EIG satisfies $\mathcal{R}_{T}(\pi) \geq  \mathcal{R}_{T; \, L}(\pi; \boldsymbol{\lambda}, \boldsymbol{\phi})$. The bound is tight if and only if $p(\boldsymbol{z} |\boldsymbol{h}_T) = q_{\boldsymbol{\lambda}}(\boldsymbol{z} | f_{\boldsymbol{\phi}}(\boldsymbol{h}_T))$ for all $\boldsymbol{z}$ and $\boldsymbol{h}_T$.
\end{proposition}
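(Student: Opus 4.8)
The plan is to obtain this Proposition as an immediate consequence of Theorem~\ref{thm:lower_bound}, combined with the two constant-offset identities established just above the statement, namely $\mathcal{I}_{T}(\pi) = \mathcal{R}_{T}(\pi) - c$ and $\mathcal{I}_{T; \, L}(\pi; \boldsymbol{\lambda},\boldsymbol{\phi}) = \mathcal{R}_{T; L}(\pi; \boldsymbol{\lambda}, \boldsymbol{\phi}) - c$, where $c := \mathbb{E}_{p(\boldsymbol{\mathcal{M}}) p(\boldsymbol{z}| \boldsymbol{\mathcal{M}})}[\log p(\boldsymbol{z})]$ is a finite quantity that does not depend on $\pi$, $\boldsymbol{\lambda}$, or $\boldsymbol{\phi}$. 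First I would invoke Theorem~\ref{thm:lower_bound} to get $\mathcal{I}_{T}(\pi) \geq \mathcal{I}_{T; \, L}(\pi; \boldsymbol{\lambda},\boldsymbol{\phi})$ for every $\pi$, $\boldsymbol{\lambda}$, $\boldsymbol{\phi}$; substituting the two identities yields $\mathcal{R}_{T}(\pi) - c \geq \mathcal{R}_{T; L}(\pi; \boldsymbol{\lambda}, \boldsymbol{\phi}) - c$, and because the same constant $c$ appears on both sides it cancels, giving the claimed inequality $\mathcal{R}_{T}(\pi) \geq \mathcal{R}_{T; L}(\pi; \boldsymbol{\lambda}, \boldsymbol{\phi})$. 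For the tightness claim I would observe that adding a common finite constant preserves equality, so the gap $\mathcal{R}_{T}(\pi) - \mathcal{R}_{T; L}(\pi; \boldsymbol{\lambda}, \boldsymbol{\phi})$ equals $\mathcal{I}_{T}(\pi) - \mathcal{I}_{T; \, L}(\pi; \boldsymbol{\lambda},\boldsymbol{\phi})$ exactly and therefore vanishes under precisely the condition already characterized in Theorem~\ref{thm:lower_bound}, i.e., $p(\boldsymbol{z} |\boldsymbol{h}_T) = q_{\boldsymbol{\lambda}}(\boldsymbol{z} | f_{\boldsymbol{\phi}}(\boldsymbol{h}_T))$ for all $\boldsymbol{z}$ and $\boldsymbol{h}_T$.

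Alternatively, and fully self-contained, I could redo the computation directly, which is worth recording since it shows the $\log p(\boldsymbol{z})$ normalizers never participate. The difference equals $\mathbb{E}_{p(\boldsymbol{\mathcal{M}})p(\boldsymbol{h}_T|\boldsymbol{\mathcal{M}}, \pi)p(\boldsymbol{z}| \boldsymbol{\mathcal{M}})}\big[\log \frac{p(\boldsymbol{z} | \boldsymbol{h}_T)}{q_{\boldsymbol{\lambda}}(\boldsymbol{z} | f_{\boldsymbol{\phi}}(\boldsymbol{h}_T))}\big]$. Marginalizing the joint $p(\boldsymbol{\mathcal{M}},\boldsymbol{h}_T, \boldsymbol{z}\mid\pi)$ down to $p(\boldsymbol{h}_T\mid\pi)\,p(\boldsymbol{z}\mid\boldsymbol{h}_T)$—which is valid because the integrand depends on $\boldsymbol{\mathcal{M}}$ only through the pair $(\boldsymbol{h}_T,\boldsymbol{z})$—then recovers $\mathbb{E}_{p(\boldsymbol{h}_T|\pi)}[D_{\text{KL}}(p(\boldsymbol{z} | \boldsymbol{h}_T) \mid\mid q_{\boldsymbol{\lambda}}(\boldsymbol{z} | f_{\boldsymbol{\phi}}(\boldsymbol{h}_T)))] \geq 0$, mirroring \eqref{eqn: lower_bound_proof} line for line, with equality exactly when the KL divergence is zero almost surely.

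There is essentially no obstacle here: the statement is a restatement of Theorem~\ref{thm:lower_bound} shifted by a constant, so the entire content reduces to verifying that $c$ is finite and genuinely independent of the optimization variables—which is immediate from its definition—so that its cancellation (equivalently, its irrelevance to the KL-based gap) is legitimate. I would present the first, reference-based route as the main argument for brevity, and mention the direct KL computation as confirmation.
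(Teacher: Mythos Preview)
Your proposal is correct and matches the paper's own proof essentially line for line: the paper also writes $\mathcal{R}_{T}(\pi) = \mathcal{I}_{T}(\pi) + c \geq \mathcal{I}_{T; \, L}(\pi; \boldsymbol{\lambda},\boldsymbol{\phi}) + c = \mathcal{R}_{T; \, L}(\pi; \boldsymbol{\lambda},\boldsymbol{\phi})$, invoking Theorem~\ref{thm:lower_bound} for the middle inequality. Your additional remark on the direct KL computation is a sound alternative but not needed here.
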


\begin{proof}
$\mathcal{R}_{T}(\pi) = \mathcal{I}_{T}(\pi) + c \geq \mathcal{I}_{T; \, L}(\pi; \boldsymbol{\lambda},\boldsymbol{\phi}) +c = \mathcal{R}_{T; \, L}(\pi; \boldsymbol{\lambda},\boldsymbol{\phi})$, making use of $\mathcal{I}_{T}(\pi) \geq \mathcal{I}_{T; \, L}(\pi; \boldsymbol{\lambda},\boldsymbol{\phi})$ from Theorem~\ref{thm:lower_bound}.
\end{proof}

We adopt standard Monte Carlo to estimate $\mathcal{R}_{T; L}(\pi; \boldsymbol{\lambda}, \boldsymbol{\phi})$:
\begin{align}
\mathcal{R}_{T; L}(\pi; \boldsymbol{\lambda}, \boldsymbol{\phi}) &= \mathbb{E}_{p(\boldsymbol{\mathcal{M}}  )p(\boldsymbol{h}_T| \boldsymbol{\mathcal{M}}, \pi) p(\boldsymbol{z}|\boldsymbol{\mathcal{M}})} \left[ \log q_{\boldsymbol{\lambda}}(\boldsymbol{z} |   f_{\boldsymbol{\phi}}(\boldsymbol{h}_T)) \right] \nonumber\\
&\approx \frac{1}{N}\sum_{i=1}^{N} \log q_{\boldsymbol{\lambda}}(\boldsymbol{z}^i |   f_{\boldsymbol{\phi}}(\boldsymbol{h}_T^i)),
\end{align}
where $\boldsymbol{\mathcal{M}}^i \sim p(\boldsymbol{\mathcal{M}})$, $\boldsymbol{h}_T^i \sim p(\boldsymbol{h}_T | \boldsymbol{\mathcal{M}}^i, \pi)$, and $\boldsymbol{z}^i \sim p(\boldsymbol{z} | \boldsymbol{\mathcal{M}}^i)$. 

We further propose a NMC estimator to estimate  $\mathcal{R}_{T}(\pi)$:
\begin{align}
\mathcal{R}_{T}(\pi) &= \mathbb{E}_{p(\boldsymbol{\mathcal{M}}  )p(\boldsymbol{h}_T| \boldsymbol{\mathcal{M}}, \pi) p(\boldsymbol{z}|\boldsymbol{\mathcal{M}})} \left[ \log p(\boldsymbol{z} | \boldsymbol{h}_T) \right] \nonumber\\
&= \mathbb{E}_{p(\boldsymbol{\mathcal{M}}  )p(\boldsymbol{h}_T| \boldsymbol{\mathcal{M}}, \pi) p(\boldsymbol{z}|\boldsymbol{\mathcal{M}})} \left[ \log p(\boldsymbol{z}, \boldsymbol{h}_T|\pi)  -\log p(\boldsymbol{h}_T| \pi)\right] \nonumber\\
&= \mathbb{E}_{p(\boldsymbol{\mathcal{M}}  )p(\boldsymbol{h}_T| \boldsymbol{\mathcal{M}}, \pi) p(\boldsymbol{z}|\boldsymbol{\mathcal{M}})} \left[\log \mathbb{E}_{ p(\boldsymbol{\mathcal{M}'}    )} [p(\boldsymbol{z},\boldsymbol{h}_T | \boldsymbol{\mathcal{M}'}, \pi)]   - \log \mathbb{E}_{ p(\boldsymbol{\mathcal{M}''}    )} [p(\boldsymbol{h}_T| \boldsymbol{\mathcal{M}''}, \pi)] \right] \nonumber\\
&\approx  \frac{1}{N} \sum_{i=1}^N \left[\log \frac{1}{M_1} \sum_{j_1=1}^{M_1} p(\boldsymbol{z}^i, \boldsymbol{h}_T^i|\boldsymbol{\mathcal{M}}^{j_1}, \pi) - \log \frac{1}{M_2} \sum_{j_2=1}^{M_2} p(\boldsymbol{h}_T^i |  \boldsymbol{\mathcal{M}}^{j_2} , \pi)\right],
\label{eqn: NMC_for_QoI}
\end{align}
where $\boldsymbol{\mathcal{M}}^i \sim p(\boldsymbol{\mathcal{M}})$, $\boldsymbol{h}_T^i \sim p(\boldsymbol{h}_T | \boldsymbol{\mathcal{M}}^i, \pi)$, $\boldsymbol{z}^i \sim p(\boldsymbol{z} | \boldsymbol{\mathcal{M}}^i)$, and $\boldsymbol{\mathcal{M}}^{j_1} \sim p(\boldsymbol{\mathcal{M}'})$ and $\boldsymbol{\mathcal{M}}^{j_2} \sim p(\boldsymbol{\mathcal{M}''})$. This NMC estimator is only used in Section~\ref{sec: motivate_example} as a baseline comparison.

\subsection{Normalizing Flows}
\label{subsection: NFs details}
An NF is an invertible transformation that maps a target random variable $\boldsymbol{z}$ to a standard normal random variable $\boldsymbol{\eta}$, such that $\boldsymbol{z} = g(\boldsymbol{\eta})$ and $\boldsymbol{\eta}=f(\boldsymbol{z})$, where $f=g^{-1}$. The probability densities of $\boldsymbol{z}$ and $\boldsymbol{\eta}$ are related via the change-of-variables formula:
\begin{align}
    p(\boldsymbol{z}) &= p_{\boldsymbol{\eta}}(f(\boldsymbol{z})) \left|\text{det} \frac{\partial f(\boldsymbol{z})}{\partial \boldsymbol{z}}\right|. %
    \label{eq:Change_Variable}
\end{align}
Lt the transformation $g$ be expressed as a composition of $n\geq 1$ successive invertible functions: $\boldsymbol{z} = g(\boldsymbol{\eta})=g_1 \circ g_{2} \circ \ldots \circ g_n(\boldsymbol{\eta}) = g_1(g_{2}(\ldots(g_n(\boldsymbol{\eta}))\ldots))$. Then, the corresponding log-density of $\boldsymbol{z}$ becomes:
\begin{align}
    \log p(\boldsymbol{z}) = \log p_{\boldsymbol{\eta}}(f_n \circ f_{n-1} \circ ... \circ f_1(\boldsymbol{z})) + \sum_{i=1}^n \log \left|\text{det} \frac{\partial f_i \circ f_{i-1} \circ ... f_1 (\boldsymbol{z})} {\partial \boldsymbol{z}}\right|,
    \label{eq:composing}
\end{align}
where $\boldsymbol{\eta} = f(\boldsymbol{z}) = f_n \circ f_{n-1} \circ ... \circ f_1(\boldsymbol{z})$ and $f_i=g_i^{-1}$. Through these successive transformations, NFs can model highly expressive and flexible densities for the target variable $\boldsymbol{z}$~\cite{Dinh_16_Density}.

To approximate the QoI posterior $q_{\boldsymbol{\lambda}}(\boldsymbol{z} | f_{\boldsymbol{\phi}}(\boldsymbol{h}_T))$, we employ NFs composed of successive coupling layers. Each coupling layer partitions $\boldsymbol{z}$ into two similarly sized subsets, $\boldsymbol{z} = [\boldsymbol{z}_1, \boldsymbol{z}_2]^{\top}$, with dimensions $n_{\boldsymbol{z}_1}$ and $n_{\boldsymbol{z}_2}$, respectively. The coupling transformations are defined as:
\begin{align}
    f_1(\boldsymbol{z}) &= \begin{pmatrix}
\boldsymbol{z}_1 \\
\tilde{\boldsymbol{z}}_2 := \boldsymbol{z}_2 \odot \exp(s_1(\boldsymbol{z}_1)) + t_1(\boldsymbol{z}_1)
\end{pmatrix}  \nonumber \\
    f_2(f_1(\boldsymbol{z})) &= \begin{pmatrix}
\tilde{\boldsymbol{z}}_1 := \boldsymbol{z}_1 \odot \exp (s_2(\tilde{\boldsymbol{z}}_2)) + t_2(\tilde{\boldsymbol{z}}_2) \\
\tilde{\boldsymbol{z}}_2 
\end{pmatrix},
\label{eq:inn}
\end{align}
where $s_1, t_1:\mathbb{R}^{n_{\boldsymbol{z}_1}} \mapsto \mathbb{R}^{n_{\boldsymbol{z}_2}}$ and $s_2, t_2:\mathbb{R}^{n_{\boldsymbol{z}_2}} \mapsto \mathbb{R}^{n_{\boldsymbol{z}_1}}$ are flexible mappings (e.g., neural networks), and $\odot$ denotes the element-wise product. The Jacobian of the transformation $f_1$ is given by:
\begin{align}
    \begin{bmatrix}
    \mathbb{I}_d & 0 \\
    \frac{\partial f_1(\boldsymbol{z})}{\partial \boldsymbol{z}_2} & \text{diag}(\exp(s_1(\boldsymbol{z}_1)))
    \end{bmatrix}, \nonumber
\end{align} 
which is lower-triangular with determinant $\exp(\sum_{j=1}^{n_{\boldsymbol{z}_2}}s_1(\boldsymbol{z}_1)_j)$. Similarly, the Jacobian of $f_2$ is upper-triangular  with determinant $\exp(\sum_{j=1}^{n_{\boldsymbol{z}_1}}s_2(\tilde{\boldsymbol{z}}_2)_j)$. 
Multiple coupling transformations ($n_{\text{trans}}$) from \eqref{eq:inn} can be composed sequentially to increase the expressive power of the overall transformation. To capture the dependencies of the intervention history $\boldsymbol{h}_T$, we additionally condition the mappings $s(\cdot)$ and $t(\cdot)$ on the embedding $f_{\boldsymbol{\phi}}(\boldsymbol{h}_T)$.

\section{Detailed Related Work}
\label{sec:further related works}

Our work on GO-CBED builds upon several related lines of research.

\paragraph{Causal Bayesian Experimental Design}
Experimental design for causal discovery within a BOED framework was initially explored by \citet{murphy2001active} and \citet{tong2001active} for discrete variables with single-target acquisition. Subsequent research extended this approach to continuous variables within BOED \citep{cho2016reconstructing, agrawal2019abcd, von2019optimal} and alternative frameworks \citep{kocaoglu2017cost, gamella2020active, ghassami2018budgeted,olko2024trust}. Notable non-BOED methods include strategies for cyclic structures \citep{mokhtarian2022unified} and latent variables \citep{kocaoglu2017experimental}. Within BOED, \citet{tigas2022interventions} proposed selecting single target-state pairs via stochastic batch acquisition, later extending this to gradient-based optimization to multiple target-state pairs \citep{tigas2023differentiable}. \citet{sussex2021near} introduced a greedy method for selecting multi-target experiments without specifying intervention states. More recently, \citet{Annadani_24_Amortized} proposed an adaptive sequential experimental design method for causal structure learning, although their objective---minimizing graph prediction error---is different from traditional BOED.
\citet{Gao_24_Policy} developed a reinforcement learning method for sequential experimental design using Prior Contrastive Estimation \citep{Foster_21_DAD} as a reward function; however, their approach relies on initial observational data and is computationally intensive. In contrast, our method uses direct policy optimization with differentiable rewards, enabling more efficient training without needing initial observational data.

\paragraph{Bayesian Causal Discovery} Causal discovery has been extensively studied in machine learning and statistics \citep{glymour2019review,heinze2018causal,peters2017elements,vowels2022d}. Traditional causal discovery methods typically infer a single causal graph from observational data \citep{brouillard2020differentiable,hauser2012characterization,lippe2021efficient,perry2022causal,peters2016causal,heinze2018causal}. In contrast, Bayesian causal discovery \citep{friedman2003being,heckerman2006bayesian,tong2001active} seeks to infer a posterior distribution over SCMs. Recent work \citep{cundy2021bcd, lorch2021dibs, annadani2021variational} has introduced variational approximations of the DAG posterior, enabling representation of uncertainty by a full distribution rather than a point estimate. Addressing the discrete nature of DAGs---which prevents straightforward gradient-based optimization---\citet{lorch2021dibs} used Stein variational gradient descent (SVGD) \citep{liu2016stein} in a continuous latent embedding space, enabling efficient Bayesian inference over DAG structures.

\paragraph{Goal-Oriented BOED}
Goal-oriented BOED extends classical optimal design principles---such as L-, D$_\text{A}$-, I-, V-, and G-optimality \citep{Atkinson2007}---by shifting the objective from general parameter estimation to directly maximizing utility for specific, downstream QoIs. The concept of tailoring experiments for particular goals, especially in nonlinear settings, was formulated by \citet{Bernardo1979}. Recent work has primarily focused on addressing computational challenges associated with optimizing the EIG for QoIs \citep{Attia2018, Wu_21_Efficient, smith2023prediction, Zhong2024, Chakraborty2024}. 
For example, \citet{Attia2018} introduced gradient-based optimization approaches for D$_\text{A}$- and L-optimal designs in linear models, while \citet{Wu_21_Efficient} developed scalable low-rank approximations for high-dimensional QoIs. In nonlinear scenarios, methods leveraging Markov chain Monte Carlo combined with kernel density estimation \citep{Zhong2024} 
and likelihood-free density-ratio estimation approaches \cite{Chakraborty2024} have been proposed. 
Although powerful for general predictive tasks, these methods typically do not accommodate the unique structure of causal queries derived from SCMs or the interventional nature inherent to causal experimentation. 

\paragraph{Non-Myopic Sequential BOED}
Non-myopic sequential BOED addresses the limitations of greedy, single-step experimental strategies by planning optimal sequences of interventions. Such methods have been broadly explored in various general settings \citep{Foster_21_DAD, Ivanova_21_iDad, Blau_22_RL, Shen_23_Bayesian}, including goal-oriented extensions such as vsOED~\citep{Shen2023b}. Within causal BOED specifically, non-myopic approaches have predominantly focused on causal discovery tasks aimed at learning the graph structures \citep{Annadani_24_Amortized, Gao_24_Policy}. Although active learning methods targeting specific causal reasoning queries have also been proposed \citep{Toth_22_Active}, these typically employ myopic (single-step) intervention designs, thus limiting their ability to strategically plan for long-term gains.

\section{Experiment Details}
\label{sec:exp_details}

\subsection{Hyperparameter Settings For Policy and Posterior Networks}
\label{subsec: Hyperparam_Policy_and_network}

The input to the policy network has shape ($n = n_{\text{int}} \times T$, $d$, 2), where the last dimension encodes the intervention data and binary intervention masks. The policy network architecture (see Figure~\ref{fig:policy_network_2}) proceeds as follows:
\begin{enumerate}
    \item The input is passed through a fully connected layer, transforming it to shape 
    ($n_{\text{int}} \times T$, $d$, $n_{\text{embedding}}$).
    \item The embedded representation is processed through $L$ stacked Transformer layers. Each layer includes:
    \begin{itemize}
        \item Two multi-head self-attention sublayers, each preceded by layer normalization and followed by dropout.
        \item A feedforward fully-connected (FFN) sublayer, also preceded by layer normalization and followed by dropout.
    \end{itemize}
    Residual connections are applied after each sublayer. This output retains the shape $(n_{\text{int}} \times T, d, n_{\text{embedding}})$.
    
    \item A max-pooling operation is applied across the $n_{\text{int}} \times T$ dimension, yielding a compressed representation of shape $(d, n_{\text{embedding}})$.
    \item The pooled representation is passed through:
    \begin{itemize}
        \item A target prediction layer, followed by a Gumbel-softmax transformation with temperature $\tau$, producing a discrete intervention target vector.
        \item A separate value layer, with final outputs scaled to fall within a specific range $\text{min}_{val}$  and $\text{max}_{val}$.
    \end{itemize}
\end{enumerate}
The detailed implementation setup is provided in Table \ref{tab:hyperparams_policy}. The ``step'' associated with $\tau$ refers to the current training step, and the values of $T$, $n_{\text{step}}$, and $n_{\text{envs}}$ per training step are kept to be the same as those used for training the posterior networks (see below).

\begin{figure}[htbp]
    \centering
    \includegraphics[width=\linewidth]{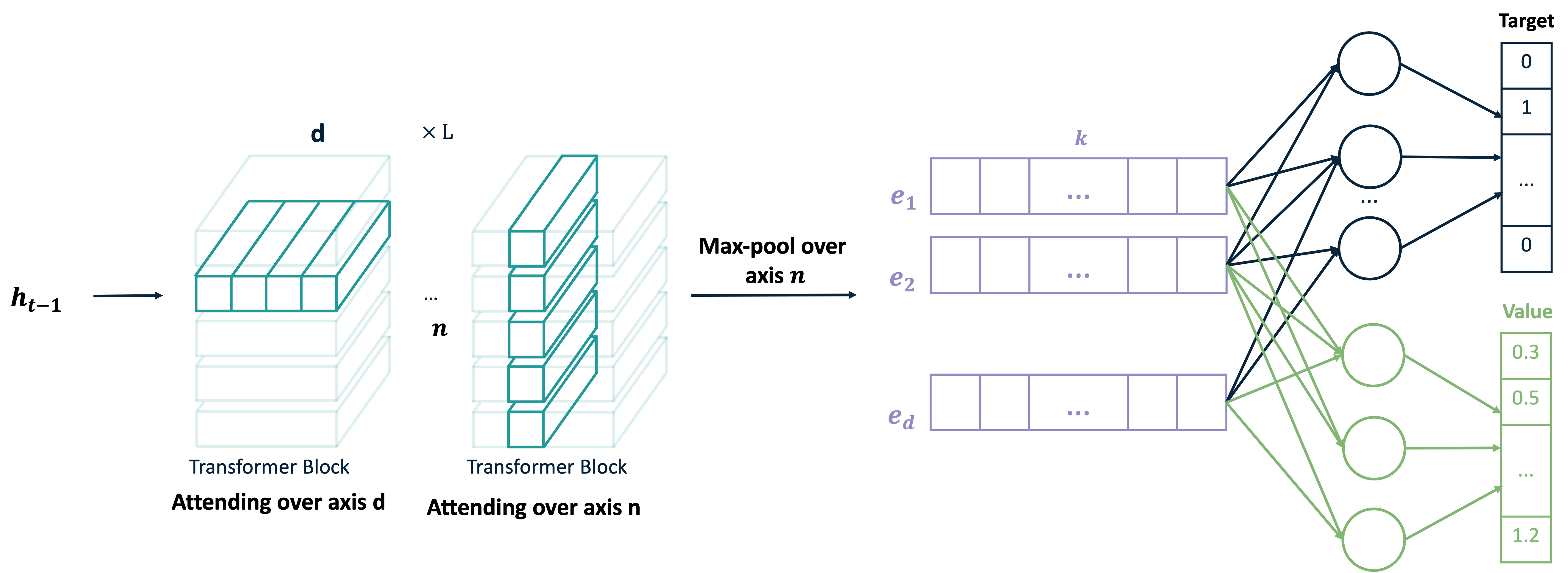}
    \caption{Policy network architecture. The model takes as input a three-dimensional tensor of shape $n \times d \times 2$, where $n = n_{\text{int}} \times T$. It is permutation-invariant along the $n$-axis and permutation-equivariant along the $d$-axis. Each of the $L$ layers first applies self-attention across the $d$-axis, followed by attention across the $n$-axis, with shared parameters across the non-attended axis.}
    \label{fig:policy_network_2}
\end{figure}

\begin{table}[htbp]
\centering
\caption{Hyperparameter settings for the policy network.}
\label{tab:hyperparams_policy}
\begin{tabular}{ll}
\toprule
\textbf{Hyperparameter}       & \textbf{Value}               \\
\midrule
Embedding dimension $n_{\text{embedding}}$   & 32     \\
Number of transformer layers ($L$)  &  4      \\
Key size in self-attention     & 16    \\
Number of attention heads     & 8        \\
FFN dimensions   & $(n_{\text{embedding}}, 4 \times n_{\text{embedding}}, n_{\text{embedding}})$       \\
Activation   & ReLU \\
Dropout rate    & 0.05                         \\
$\max_{val}$       & 10      \\
$\min_{val}$       & $-10$         \\
$\tau$ & min($5 \times 0.9995^{\text{step}}$, $0.1$)   \\
Initial learning rate &	$5\times 10^{-4}$ (Linear) or $10^{-4}$ (Nonlinear) \\
Scheduler	 & ExponentialLR with $\gamma=0.8$, step every 1000 training steps \\
\hline
 {$T$} & $10$ when $d = 10, 20$ \\
 & $15$ when $d=30$ \\
\hline
 {$n_{\text{step}}$} & $10000$ when $d = 10, 20$ \\
 & $15000$ when $d=30$ \\
\hline
$n_{\text{env}}$ per training step & 10 \\

\bottomrule
\end{tabular}
\end{table}

For the posterior networks, the initial input has shape ($n_{\text{envs}}$, $n_{\text{int}} \times T$, $d$, 2), representing full trajectories. The processing steps follows the same as those of the policy network up to step 3, resulting in a max-pooled output of shape ($n_{\text{envs}}$, $d$, $n_{\text{embedding}})$. 
Specific to the causal discovery case, starting from step 4: 
\begin{enumerate}
  \setcounter{enumi}{3}
    \item The pooled representation is processed as follows:
    \begin{itemize}
        \item Two independent linear transformations are applied to produce vectors $\boldsymbol{u}$ and  $\boldsymbol{v}$, each of shape ($n_{\text{envs}}$, $d$, $n_{\text{out}}$).
        \item Both $\boldsymbol{u}$ and $\boldsymbol{v}$ are normalized using their $\ell_2$-norm along the last dimension.
    \end{itemize}
    \item Pairwise edge logits are computed:
        \begin{itemize}
        \item A dot product between  every pair of variables $\boldsymbol{u}_i$ and $\boldsymbol{v}_j$, resulting in a tensor of shape ($n_{\text{envs}}$, $d$, $d$).
        \item The logits are scaled by a learnable temperature parameter ``temp'' via the operation $\text{logit}_{ij} \times \exp(\text{temp})$, which is then added element-wise with a learnable term, ``$\text{bias}$''.
    \end{itemize}
\end{enumerate}
The detailed implementation setup is provided in Table \ref{tab:hyperparams_post_causal}.

\begin{table}[htbp]
\centering
\caption{Hyperparameter settings for the posterior network in the causal discovery case.}
\label{tab:hyperparams_post_causal}
\begin{tabular}{ll}
\toprule
\textbf{Hyperparameter}       & \textbf{Value}               \\
\midrule
Embedding dimension $n_{\text{embedding}}$   & 128     \\
Number of transformer layers ($L$)  &  8     \\
Key size in self-attention     & 64    \\
Number of attention heads     & 8        \\
FFN dimensions   & $(n_{\text{embedding}}, 4 \times n_{\text{embedding}}, n_{\text{embedding}})$       \\
Activation   & ReLU \\
Dropout rate    & 0.05   \\
Bias   & $-3$  \\
Temp   & 2    \\
Initial learning rate & $10^{-4}$ \\
Scheduler	 & ExponentialLR with $\gamma=0.8$, step every 1000 training steps \\
\bottomrule
\end{tabular}
\end{table}

Specific to the causal reasoning case, starting from step 4:
\begin{enumerate}
  \setcounter{enumi}{3}
    \item The pooled representation is flattened to shape ($n_{\text{envs}}$, $d \times n_{\text{embedding}}$) and passed into the $s(\cdot)$ and $t(\cdot)$ networks, with $n_{\text{trans}}$ transformations in total. The final output has shape ($n_{\text{envs}}$, $n_{\boldsymbol{z}}$). 
\end{enumerate}
The detailed implementation setup is provided in Table \ref{tab:hyperparams_post_reasoning}.

\begin{table}[htbp]
\centering
\caption{Hyperparameter settings for the posterior network in the causal reasoning case.}
\label{tab:hyperparams_post_reasoning}
\begin{tabular}{ll}
\toprule
\textbf{Hyperparameter}       & \textbf{Value}               \\
\midrule
Embedding dimension $n_{\text{embedding}}$   & 16     \\
Number of transformer layers ($L$)  &  8     \\
Key size in self-attention     & 16    \\
Number of attention heads     & 8        \\
FFN dimensions   & $(n_{\text{embedding}}, 4 \times n_{\text{embedding}}, n_{\text{embedding}})$       \\
Activation   & ReLU \\
Dropout rate    & 0.05   \\
$n_{\text{trans}}$ & 4 \\
$s(\cdot)$ and $t(\cdot)$ dimensions & (256, 256, 256) \\
Initial learning rate & $5\times 10^{-4}$ (Linear) or $10^{-3}$ (Nonlinear) \\
Scheduler & ExponentialLR with $\gamma=0.8$, step every 1000 training steps \\
\bottomrule
\end{tabular}
\end{table}

\subsection{Example in Section~\ref{sec: motivate_example}}
\label{subsec: case1_setups}
In this example, we consider a fixed causal graph structure with Gaussian priors on parameters:
\begin{align}
    \begin{array}{llll}
     \theta_{12} \sim \mathcal{N}(0.1, 1), & \theta_{23} \sim \mathcal{N}(1, 0.2^2),   & \theta_{35} \sim \mathcal{N}(0.2, 0.5^2), & \theta_{45} \sim \mathcal{N}(-0.5, 0.5^2),\\
     \theta_{13} \sim \mathcal{N}(-0.2, 0.5^2), &    \theta_{24} \sim \mathcal{N}(0.3, 0.3^2), & \theta_{36} \sim \mathcal{N}(0, 0.5^2),  & \theta_{56}  \sim \mathcal{N}(0, 0.5^2), \\
     \theta_{14} \sim \mathcal{N}(-0.5, 0.3^2),
     \end{array} \nonumber
\end{align}
with observation model $X_i = \boldsymbol{\theta}_i^{\top} \boldsymbol{X}_{\mathrm{pa}(i)} + \epsilon_i$ where $\boldsymbol{\theta}_i = [\theta_{i j}]^{\top}, j \in \mathrm{pa}(i)$, and additive Gaussian noise $\epsilon_i \sim \mathcal{N}(0, \sigma_i^2)$ with standard deviations $\boldsymbol{\sigma} = \{0.2, 0.2, 0.2, 0.2 , 0.3, 0.3\}$. This linear-Gaussian setup enables analytical posterior computations and efficient estimation of the variational lower bound $\mathcal{R}_{T; L}(\pi; \boldsymbol{\lambda}, \boldsymbol{\phi})$.

Figure \ref{fig:toy1_post_compare} provides a qualitative assessment of the posterior approximation $q_{\boldsymbol{\lambda}}(\boldsymbol{z} |f_{\boldsymbol{\phi}}(\boldsymbol{h}_T))$ achieved by NFs. The NF-based approximations closely align with the true posterior predictive distributions $p(\boldsymbol{z}|h_T)$ across these two examples, demonstrating a high-quality posterior approximation.

\begin{figure}[htbp]
\centering
    \includegraphics[width=0.48\linewidth]{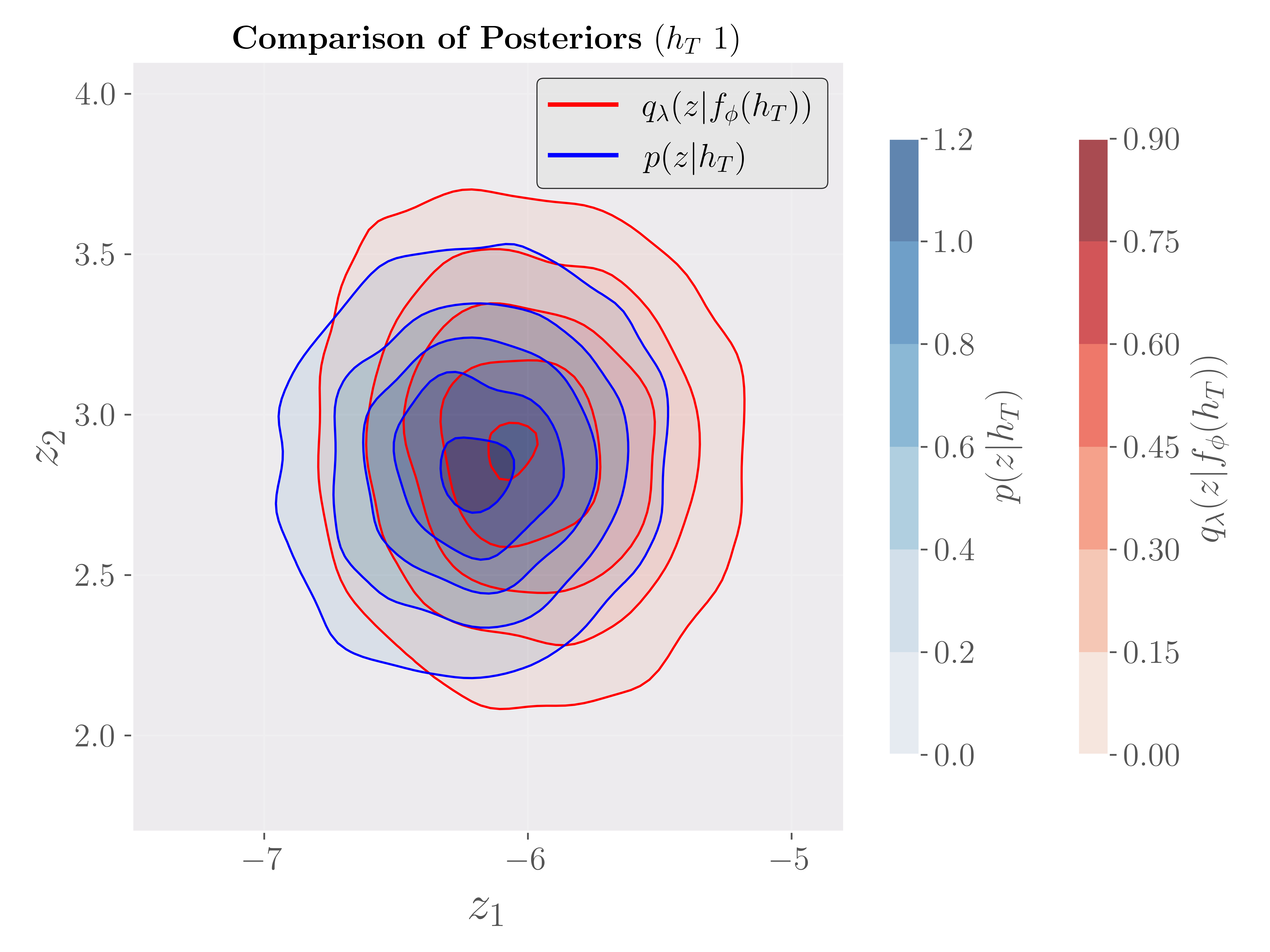}
    \label{fig:Compare_post_2}
    \includegraphics[width=0.48\linewidth]{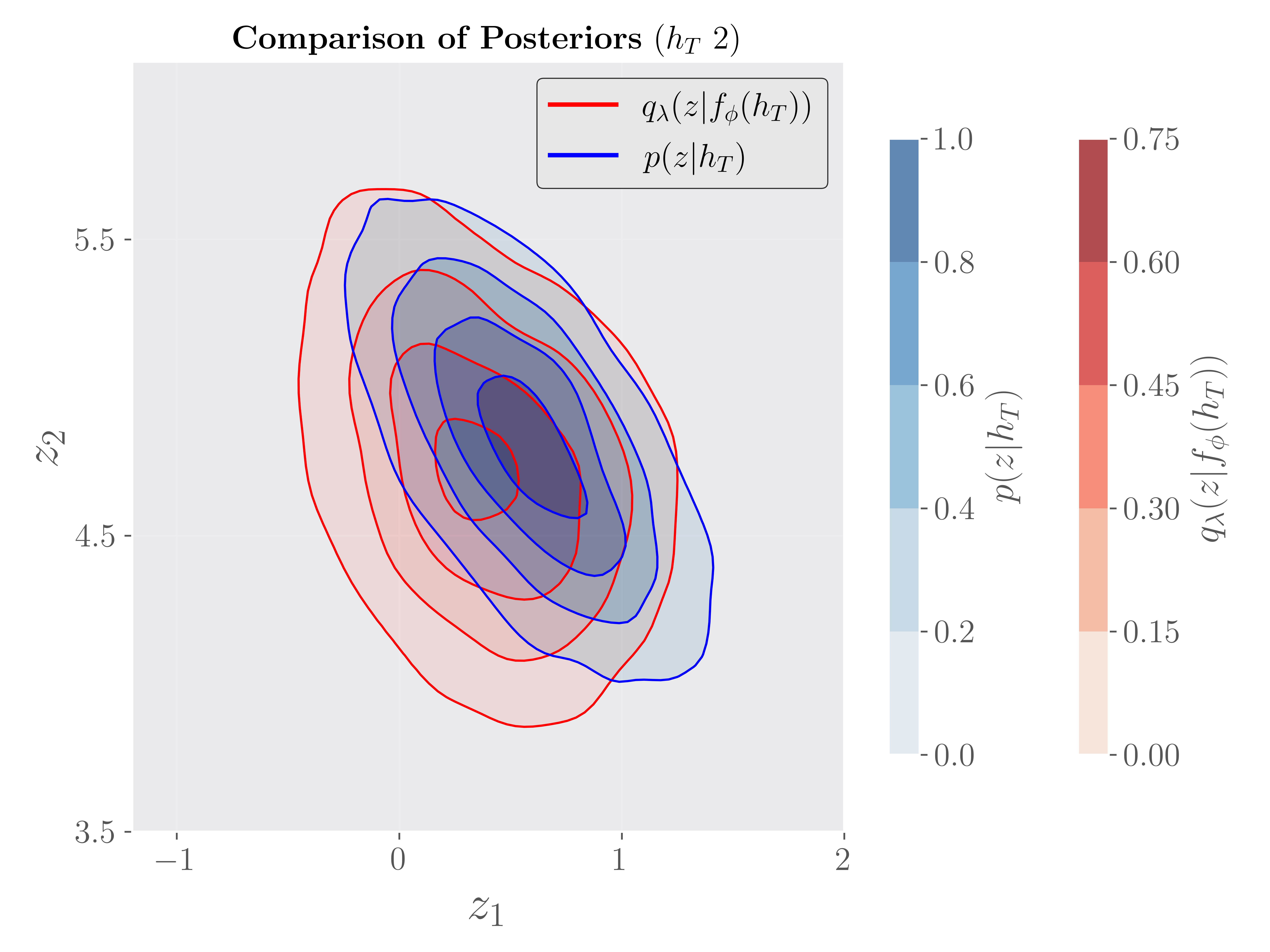}
\caption{
Comparison between the true posterior predictive distribution $p(\boldsymbol{z} | \boldsymbol{h}_T)$ and the variational approximation 
$q_{\boldsymbol{\lambda}}(\boldsymbol{z} |f_{\boldsymbol{\phi}}(\boldsymbol{h}_T))$ 
for two simulated trajectories. The approximate posterior closely aligns with the true posterior.
}
\label{fig:toy1_post_compare}
\end{figure}

\subsection{Examples in Sections~\ref{sec: eval_cr} and~\ref{sec:eval_csl}}
\label{subsec: Synthetic_cases_setup}

In the synthetic experiments, we consider Erd\"{o}s--R\'{e}nyi (ER) and Scale-Free (SF) random graphs as priors over graph structures. For semi-synthetic experiments, we utilize gene regulatory networks derived from the DREAM benchmarks \citet{greenfield2010dream4}, which reflect realistic biological scenarios.

\subsubsection{Priors over Graph Structures}
\label{subsubsec: Prior_over_Graphs}

\paragraph{Erd\"{o}s--R\'{e}nyi}  
In the ER model, each potential edge between node pairs is included independently with a fixed probability $p$. 
Given $n$ nodes, the resulting random undirected graph has a number of edges that follows a binomial distribution, with the expected number of edges equal to $p \times \binom{n}{2}$. 
Following \citet{Lorch_22_Amortized}, we scale $p$ such that the expected number of edges is $\mathcal{O}(d)$, where $d$ is the desired average degree. 
To obtain a DAG, we first retain only the lower triangular portion of the adjacency matrix and then apply a random permutation to the node indices to break symmetry.

\paragraph{Scale-Free}  
SF graphs exhibit a power-law degree distribution, where the probability that a node has degree $k$ is proportional to $k^{-\gamma}$, with an exponent $\gamma > 1$~\citep{Barabasi_99_Emergence}. Consequently, a small subset of nodes (``hubs'') has a very high number of connections, while most nodes have relatively few. Such structures are commonly observed in biological and social networks. We generate SF graphs using the Barab\'{a}si--Albert preferential attachment model implemented in NetworkX~\cite{Hagberg_08_Exploring}, which iteratively adds nodes by connecting them preferentially to existing high-degree nodes.

\paragraph{Realistic Gene Regulatory Networks} 
For semi-synthetic scenarios, we employ networks from the DREAM benchmarks \cite{greenfield2010dream4}, widely used for evaluating computational approaches to reverse-engineering biological systems. DREAM datasets provide realistic simulations of gene regulatory and protein signaling networks generated by GeneNetWeaver v3.12. Specifically, our experiments focus on two DREAM subnetworks---the \textit{E. coli} and Yeast networks---following the setup described in \cite{tigas2023differentiable, Lorch_22_Amortized}.

\subsubsection{Mechanisms}
\label{subsubsec: Mechanisms}
\paragraph{Linear Model}  
In the linear setting, each variable 
$X_i$ is modeled as a linear function of its parent variables $\boldsymbol{X}_{\text{pa}(i)}$ according to
\begin{align}
    X_i = \boldsymbol{\theta}_i^\top \boldsymbol{X}_{\text{pa}(i)} + b_i + \epsilon_i,
\end{align}
where $\epsilon_i \sim \mathcal{N}(0, \sigma^2)$ with fixed variance $\sigma^2 = 0.1$. The parameters have priors $\boldsymbol{\theta}_i \sim \mathcal{N}(0, 2)$ and $b_i \sim \mathcal{U}(-1, 1)$.

\paragraph{Nonlinear Model}  
For the nonlinear setting, the functional relationship between each child and its parent is modeled using a feedforward neural network with two hidden layers, each containing 8 ReLU-activated neurons. All weights and biases have standard normal priors.

\paragraph{Queries for Causal Reasoning}

For the causal reasoning experiments shown in Figure~\ref{fig: causal_reasoning_synthetic}, the query QoIs for the four panels are: $\boldsymbol{z} = \{X_6, X_8 \,|\,\text{do}(X_2 \sim \mathcal{N}(5, 2^2))\}; \{ X_0, X_5  \,|\,\text{do}(X_6 \sim \mathcal{N}(3, 1 ))\}; \{X_3, X_5\,|\,\text{do}(X_5 \sim \mathcal{N}(6, 0.5^2))\};$ and $ \{X_3, X_4\,|\,\text{do}(X_9 \sim \mathcal{N}(4, 1))\}$. For the \textit{E. coli} case in Figure~\ref{fig:ecoli_causal_reasoning}, the QoI is $\boldsymbol{z} = \{X_6, X_8  \,|\,\text{do}( X_7 \sim \mathcal{N}(4, 2^2))\}$.

\paragraph{Comparison to Discovery-Oriented Policy}
To contextualize the performance of 
GO-CBED-$\boldsymbol{z}$, we also include the performance of the structure-learning-oriented policy $\pi^*_{\boldsymbol{G}}$, evaluated on $\mathcal{R}_{T; L}(\pi^*_{\boldsymbol{G}})$, as shown in in Figures \ref{fig: CR_ACED_G} and \ref{fig: CR_ACED_G_ecoli}. While GO-CBED-$G$ is effective for causal discovery, it is consistently outperformed by GO-CBED-$\boldsymbol{z}$ when the objective is to estimate specific causal inquiries, as seen by comparing to
Figures~\ref{fig: causal_reasoning_synthetic} and \ref{fig:ecoli_causal_reasoning}.

\begin{figure}[htbp!]
    \centering
    \includegraphics[width=\linewidth]{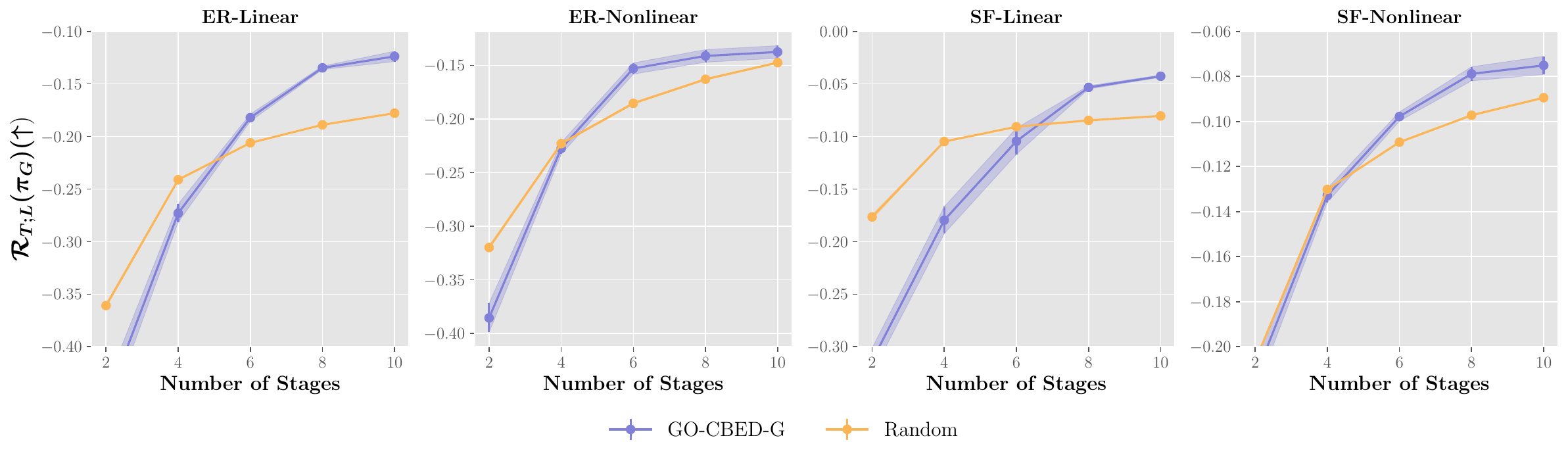}
    \caption{Evaluation of policies on ER and SF graphs with both linear and nonlinear causal mechanisms. The $\pi^*_{\boldsymbol{G}}$ demonstrates strong performance in accurately identifying the underlying causal graph across all settings.}
    \label{fig: CR_ACED_G}
\end{figure}

\begin{figure}[htbp!]
    \centering
    \includegraphics[width=0.5\linewidth]{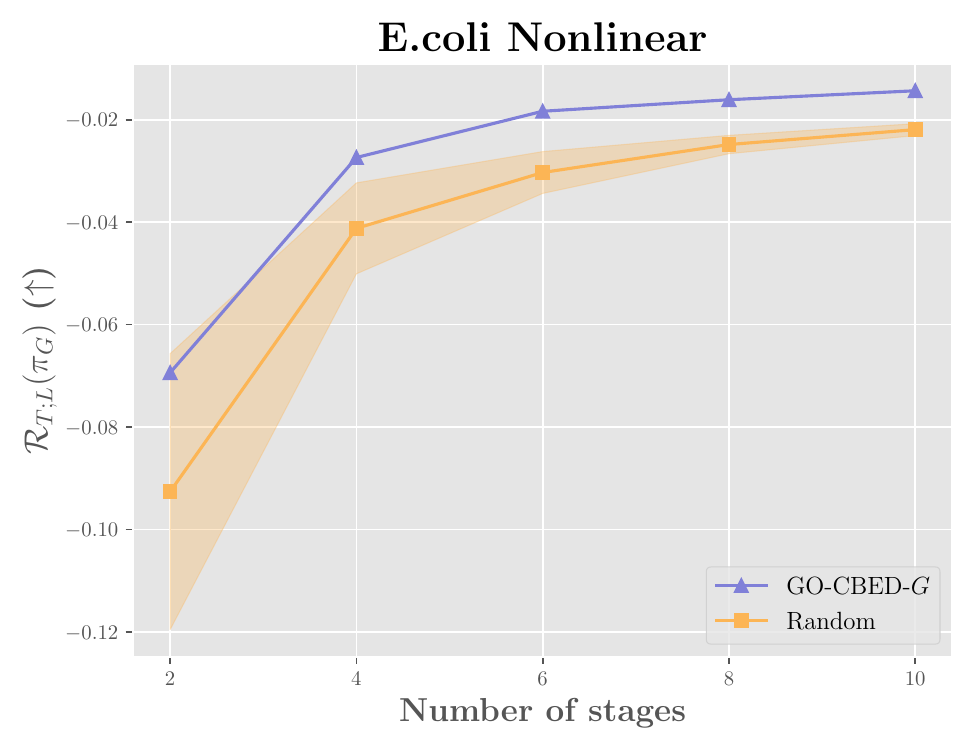}
    \caption{Evaluation of policies on \textit{E. coli} graphs with nonlinear causal mechanisms.  The $\pi^*_{\boldsymbol{G}}$ demonstrates strong performance in accurately identifying the underlying causal graph.}
    \label{fig: CR_ACED_G_ecoli}
\end{figure}

\subsection{Incorporating Existing Observational Data into the Prior}
\label{subsec: additional_details_causal_reasoning}

In practical settings, 
it is common to have access to
a set of observational data  $\mathcal{D}$ prior to designing interventions. This data can be used to update the prior into a posterior, which then serves as an informative prior for the subsequent experimental design. 
We infer both the posterior over the graph structure $p(G| \mathcal{D})$ and the parameters $p(\boldsymbol{\theta}| \mathcal{D}, G)$ in two stages. 

First, since the realized data may not be available during posterior construction, we treat $\mathcal{D}$ as a random variable. 
We infer the graph structure using the approach of \cite{Lorch_22_Amortized}, and train an amortized  variational posterior $q_{\boldsymbol{\lambda}}(f_{\boldsymbol{\phi}}(\mathcal{D}))$, which generalizes across potential realizations of $\mathcal{D}$, by minimizing
\begin{align}
    \mathbb{E}_{p(\mathcal{D})} \left[ D_{\text{KL}}\left(p(G|\mathcal{D}) \,||\, q_{\boldsymbol{\lambda}}(f_{\boldsymbol{\phi}}(\mathcal{D})\right) \right]
    \label{eqn: avici}
\end{align}
with respect to variational parameters $\boldsymbol{\lambda}$ and $\boldsymbol{\phi}$. The approximate posterior $q_{\boldsymbol{\lambda}}$ is modeled as a product of independent Bernoulli distributions over potential edges. 
Once a specific realization $\mathcal{D}^*$ becomes available, the posterior is instantiated via substitution as $q_{\boldsymbol{\lambda}}(f_{\boldsymbol{\phi}}(\mathcal{D}^*))$. Samples from this distribution are drawn from the Bernoulli marginals and retaining only acyclic graphs to ensure valid DAGs.

Second, to perform inference over the parameters $\boldsymbol{\theta}$, 
we exploit the conditional independence structure of the posterior: 
\begin{align}
    p(\boldsymbol{\theta}|\mathcal{D}, G) &= \prod_{j} p(\boldsymbol{\theta}_j| \mathcal{D}_{\text{pa}(j)}, \mathcal{D}_j, G).
    \label{eqn: post_factorization}
\end{align}
This factorization enables efficient sampling of $\boldsymbol{\theta}$ by decomposing the joint posterior into node-wise conditionals.
For linear models, we sample directly from the posterior $p(\boldsymbol{\theta}_{j}|G, \mathcal{D})$ using Markov chain Monte Carlo.
For nonlinear models, we apply Pyro's Stochastic Variational Inference (SVI) \cite{Bingham_19_pyro} to learn a mean-field Gaussian approximation to the posterior.

\section{Additional Experiments}
\label{sec:additional_exps}

\subsection{Higher Bias in the NMC Estimator}
\label{subsec: spider_explain}

We provide a qualitative comparison between the NMC estimator and the GO-CBED approach. The experiment follows the setup in Figure~\ref{fig:NMC-illustrate}, which assumes a fixed graph with $T = 1$ and additive Gaussian noise $\epsilon_i \sim \mathcal{N}(0, 0.3^2)$ for all observations. Interventions are uniformly selected in integers from $-5$ to $5$, and the 
$\mathcal{R}_{T}$ or $\mathcal{R}_{T; \, L}$
is evaluated using the NMC and GO-CBED estimators and presented in Figure~\ref{fig:NMC-comparison}. Since there is no policy optimization in this setting, GO-CBED reduces to training a variational posterior network using NFs. 

The NMC estimator uses an outer loop size of $5,000$ samples, and the inner loop sample size is indicated in the parenthesis in the legend of Figure~\ref{fig:NMC-comparison}. This sample size is also used as the training sample size for GO-CBED.  
Despite using significantly fewer samples, GO-CBED consistently identifies the optimal EIG near the boundary of the design space. This finding reinforces our observation from Section~\ref{sec: motivate_example}: variational approximations via GO-CBED can offer more efficient and reliable EIG estimation compared to NMC,  especially in causal inference tasks involving large graphs or high-dimensional parameter spaces, where traditional sampling becomes computationally and memory intensive.

\begin{figure}[htbp]
\centering
    \centering
    \includegraphics[trim={7em 7em 10em 7em},clip,width=0.5\linewidth]{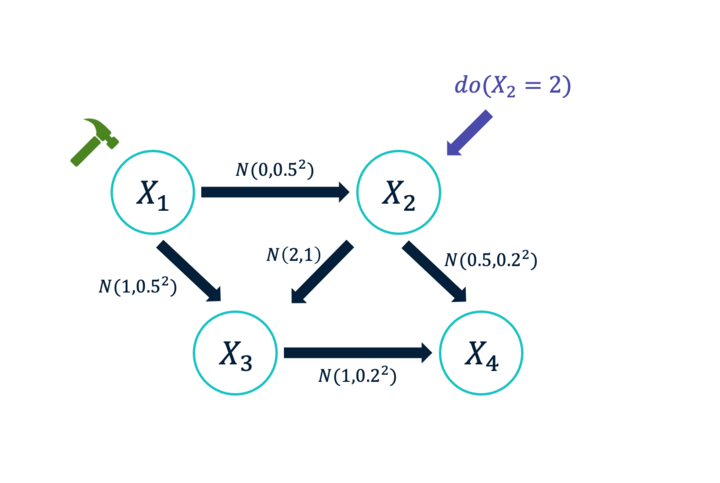}
    \caption{Evaluation of interventions on node 1 using integers from $-5$ to 5, with the causal query defined as $\boldsymbol{z} = \{X_3, X_4 \,|\, \text{do}(X_2 = 2) \}$.}
    \label{fig:NMC-illustrate}
\end{figure}

\begin{figure}[htbp]
\centering
    \includegraphics[trim={0em 3em 0em 3em},clip,width=0.6\linewidth]{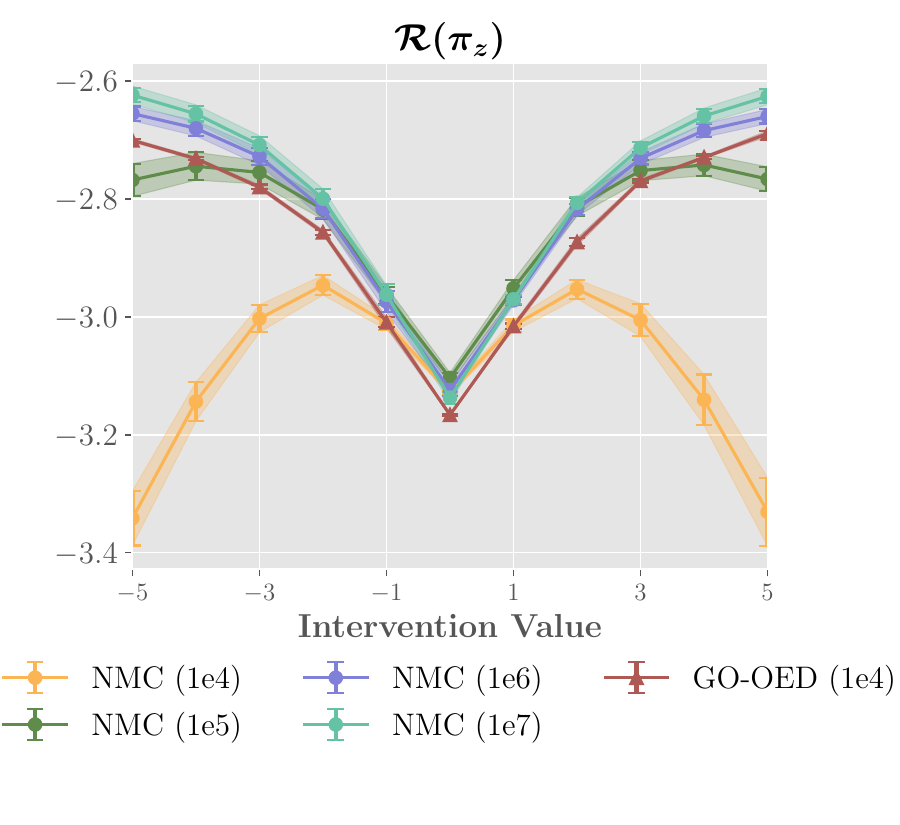}
    \caption{Prior-omitted EIG lower bound estimates, with parenthesis values denoting the inner loop sample size for NMC and training sample size for GO-CBED. Shaded regions represent $\pm 1$ standard error across 4 random seeds. }
    \label{fig:NMC-comparison}
\end{figure}

\subsection{Causal Reasoning on Diverse Realistic Graph Structures}
\label{subsec: additional_causal_reasoning_tasks}

In Section~\ref{sec: eval_cr}, we presented causal reasoning results using the \textit{E. coli} gene regulatory network. Here, we extend the analysis to include additional tasks based on both \textit{E. coli} and Yeast gene regulatory networks, each incorporating nonlinear mechanisms. These supplementary experiments further demonstrate the robustness and generality of GO-CBED  across a range of causal graph structures and varying complexities of intervention-target relationships (see Figure~\ref{fig:additional_cr}).

\begin{figure}[htbp]
    \centering
    \includegraphics[width=1.0\linewidth]{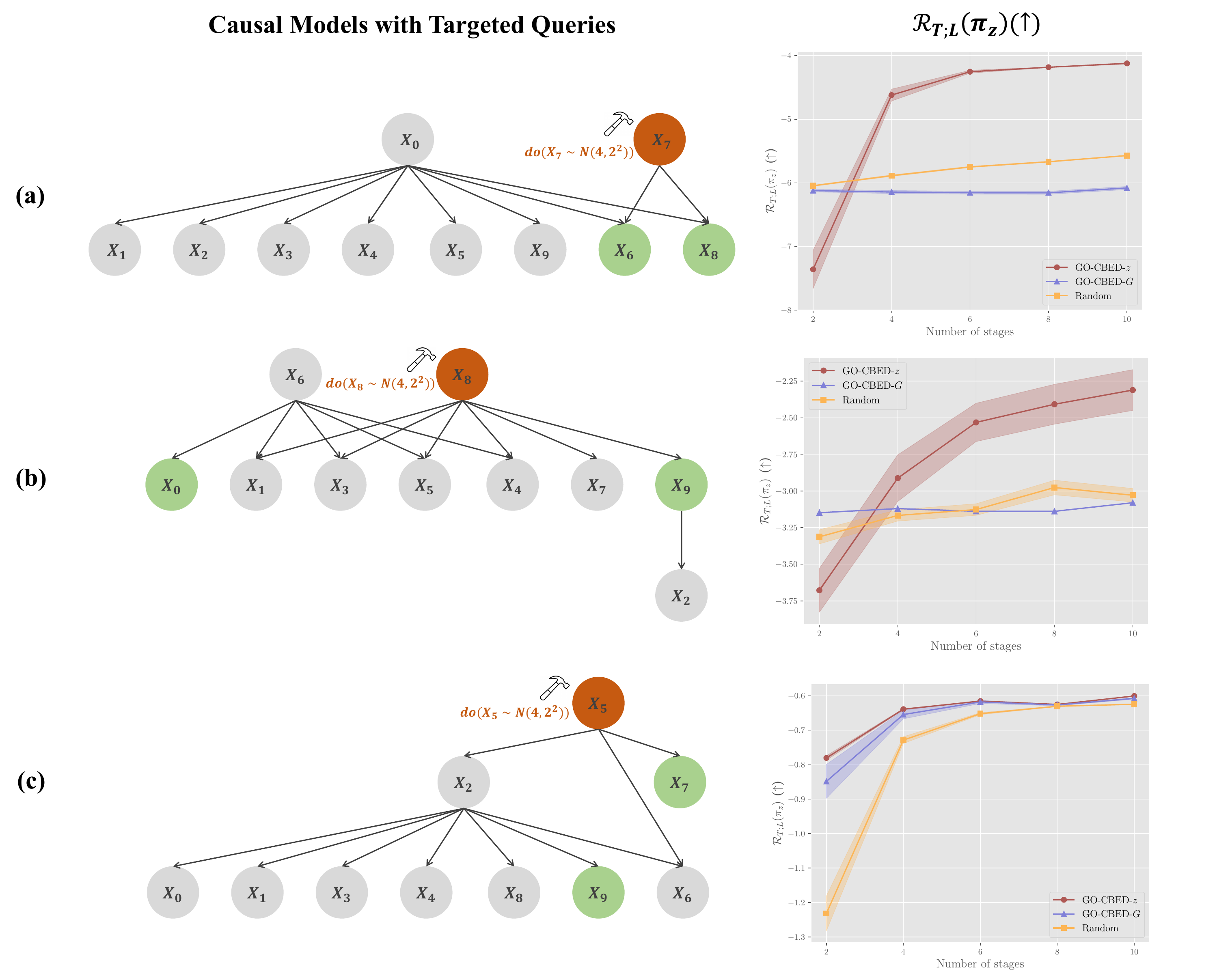}
    \caption{Additional causal reasoning experiments on nonlinear gene regulatory networks. \textbf{(a)} \textit{E. coli} network in Section~\ref{sec: eval_cr}: intervention on node $7$ targeting nodes $6$ and $8$. \textbf{(b)} Yeast network: intervention on node $8$ targeting nodes $0$ and $9$. \textbf{(c)} \textit{E. coli} network: intervention on node $5$ targeting nodes $7$ and $9$. GO-CBED consistently outperforms baseline methods, with performance gains varying based on structural complexity of the intervention-target relationships. Shaded regions represent $\pm 1$ standard error over $4$ random seeds.}
    \label{fig:additional_cr}
\end{figure}

Specifically, Figure~\ref{fig:additional_cr}(a) corresponds to the main result in the paper. Figures~\ref{fig:additional_cr})(b) and (c) illustrate additional scenarios, highlighting GO-CBED's consistent advantages across diverse topologies. In Figure~\ref{fig:additional_cr})(b), GO-CBED achieves approximately twice the EIG compared to baselines. This improvement likely stems from the complex paths linking intervention nodes to targets, where goal-oriented strategies more effectively exploit structural dependencies.

In contrast, Figure~\ref{fig:additional_cr})(c) shows a reduced performance gap. This is likely due to node $X_5$ being highly informative for both causal discovery and targeted queries, aligning the objective of structure learning and query-specific inference. Notably, the random policy also performs competitively in this setting, likely benefiting from the high-quality variational posterior achievable even under random interventions. We leave a deeper investigation of this phenomenon as an interesting direction for future work.

\subsection{Performance Comparison with Original Posterior Inference Methods}
\label{subsec: policy_with_original_posterior}

In the main paper, we establish a fair comparison among policies by evaluating them using their respectively pre-trained variational posteriors, thereby isolating the impact of policy optimization. However, a key advantage of GO-CBED is its joint training of both the policy and posterior networks. Specifically, the posterior networks trained in GO-CBED can themselves serve as a highly efficient inference tool, even independent of the policy.

Here, we provide additional results in which each baseline method is evaluated using its original posterior inference procedure, as proposed in their respective papers. Specifically, CAASL \citep{Annadani_24_Amortized} uses a fixed pre-trained AVICI posterior \citep{lorch2021dibs}; DiffCBED \citep{tigas2023differentiable} and SoftCBED \citep{tigas2022interventions} rely on DAG-Bootstrap \citep{friedman2013data,hauser2012characterization} for linear SCMs and DiBS \citep{lorch2021dibs} for nonlinear SCMs.

Figures~\ref{fig:synthetic_scm_appendix} and~\ref{fig:semi_synthetic_scm_appendix} present results on synthetic and semi-synthetic SCMs with both linear and nonlinear mechanisms. GO-CBED consistently outperforms baselines across both the $\mathbb{E}$-SHD and $F_1$ score metrics. Interestingly, even the random policy---when paired with pre-trained variational posteriors---achieves competitive performance, highlighting the advantage variational posteriors bring to causal learning.

\begin{figure}[htbp]
    \centering
    \includegraphics[width=0.49\linewidth]{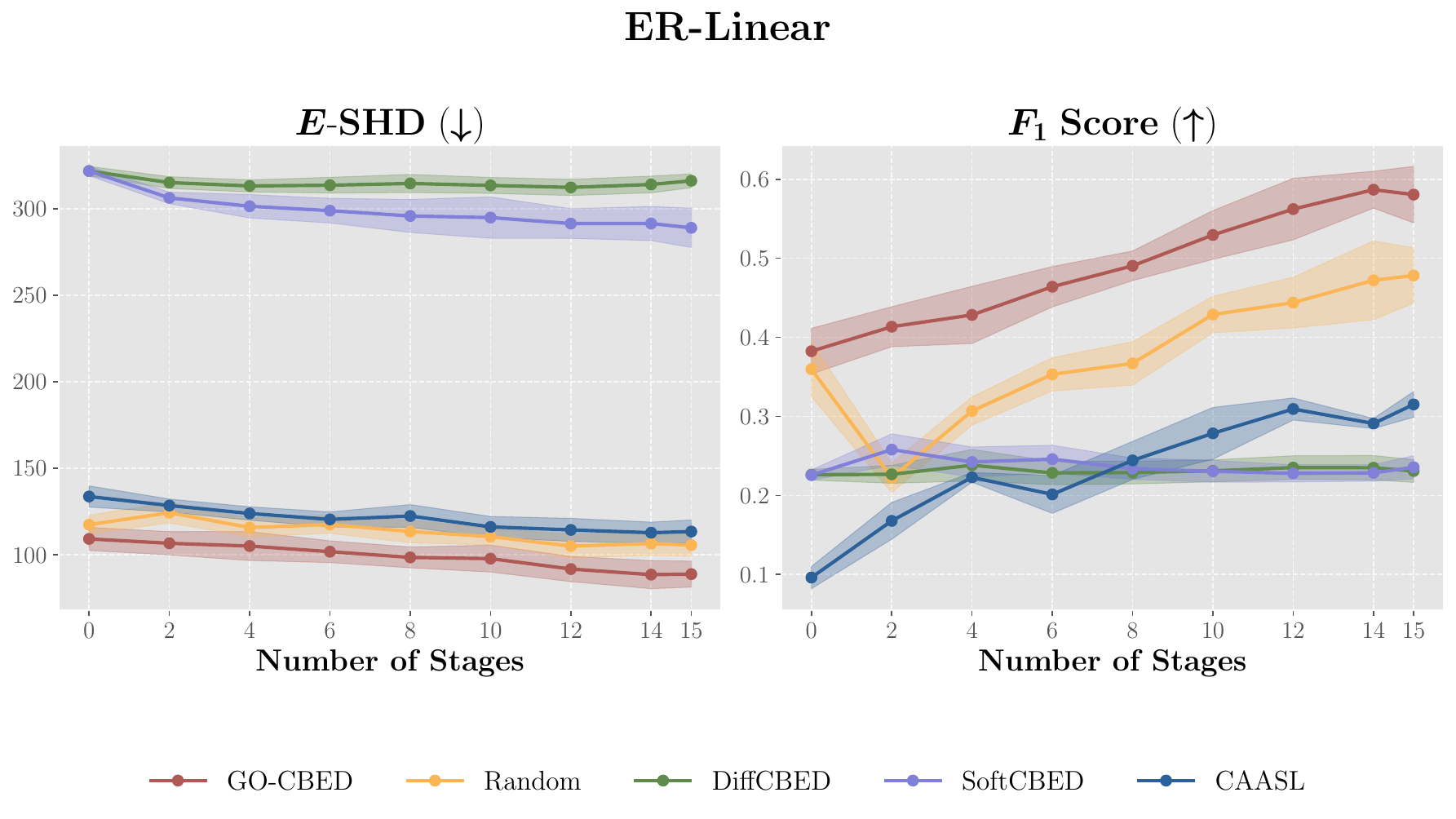}
    \includegraphics[width=0.49\linewidth]{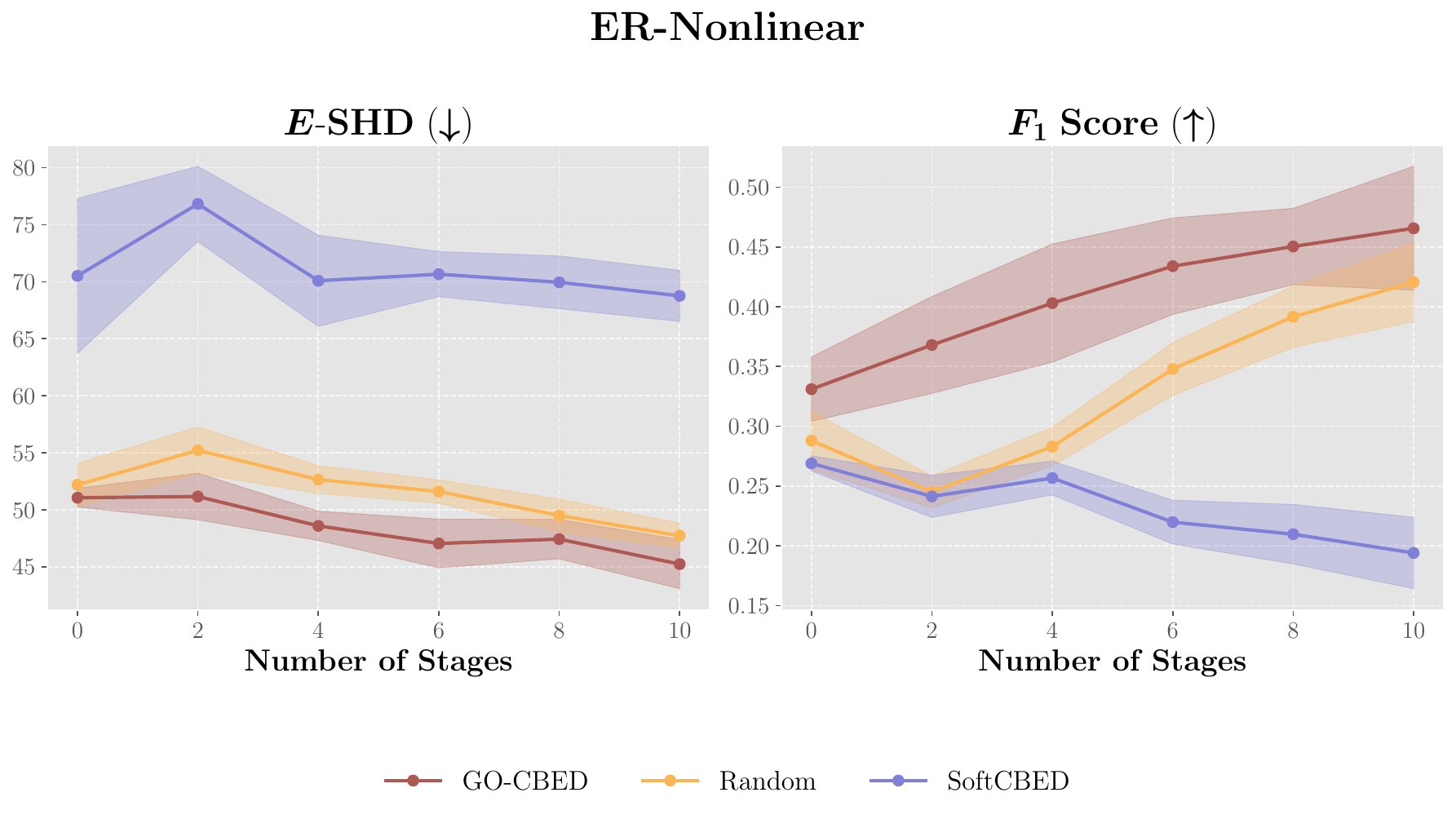}\\
    \includegraphics[width=0.49\linewidth]{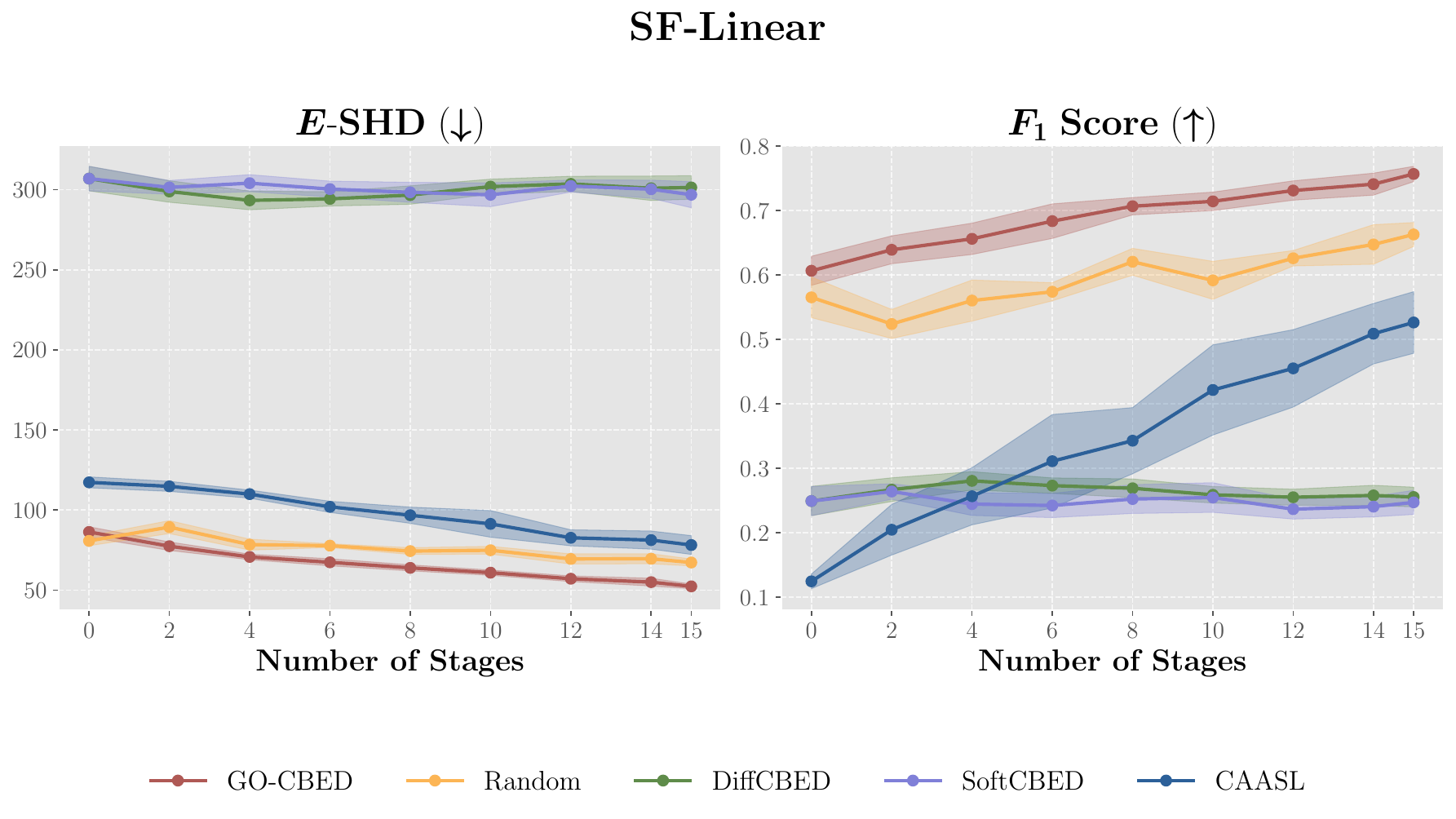}
    \includegraphics[width=0.49\linewidth]{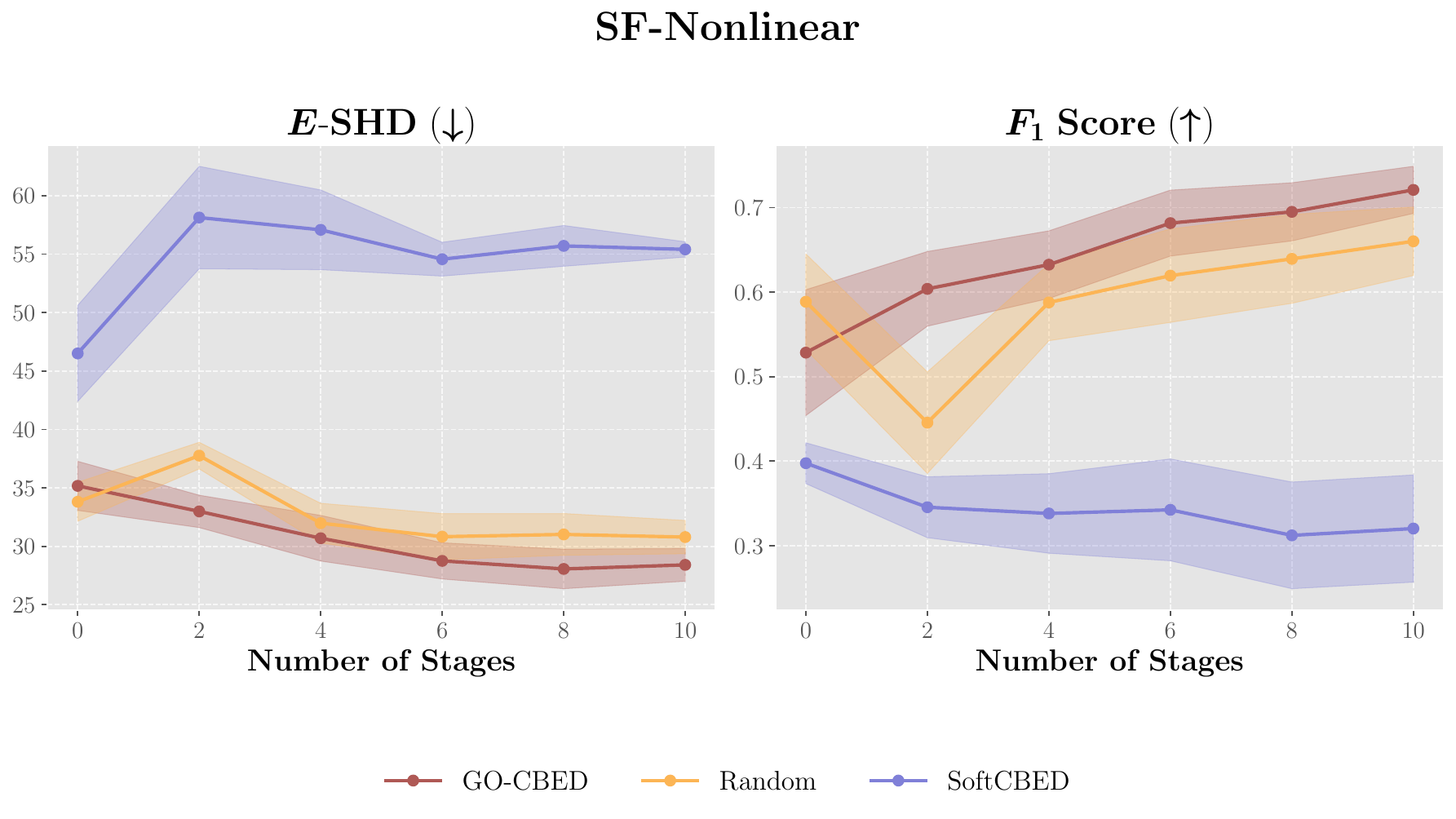}

    \caption{
    Performance comparison on synthetic SCMs, with each method using its originally proposed posterior inference approach. GO-CBED consistently outperforms all baselines across both linear and nonlinear settings, demonstrating the advantage of jointly optimizing policy and posterior networks. Shaded regions represent $\pm 1$ standard error across $10$ random seeds.
    }
    \label{fig:synthetic_scm_appendix}
\end{figure}

\begin{figure}[htbp]
    \centering
    \includegraphics[width=0.49\linewidth]{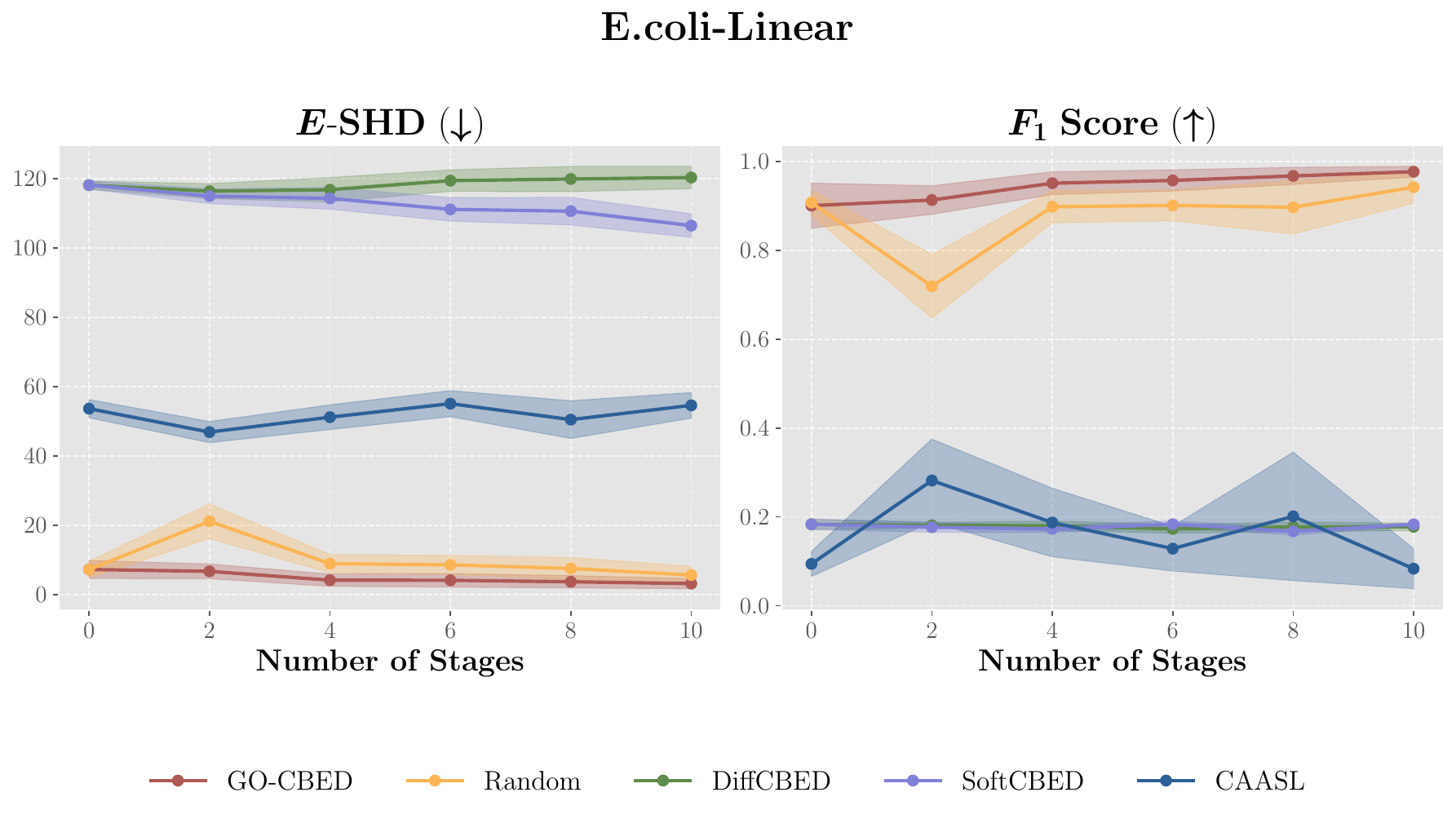}
    \includegraphics[width=0.49\linewidth]{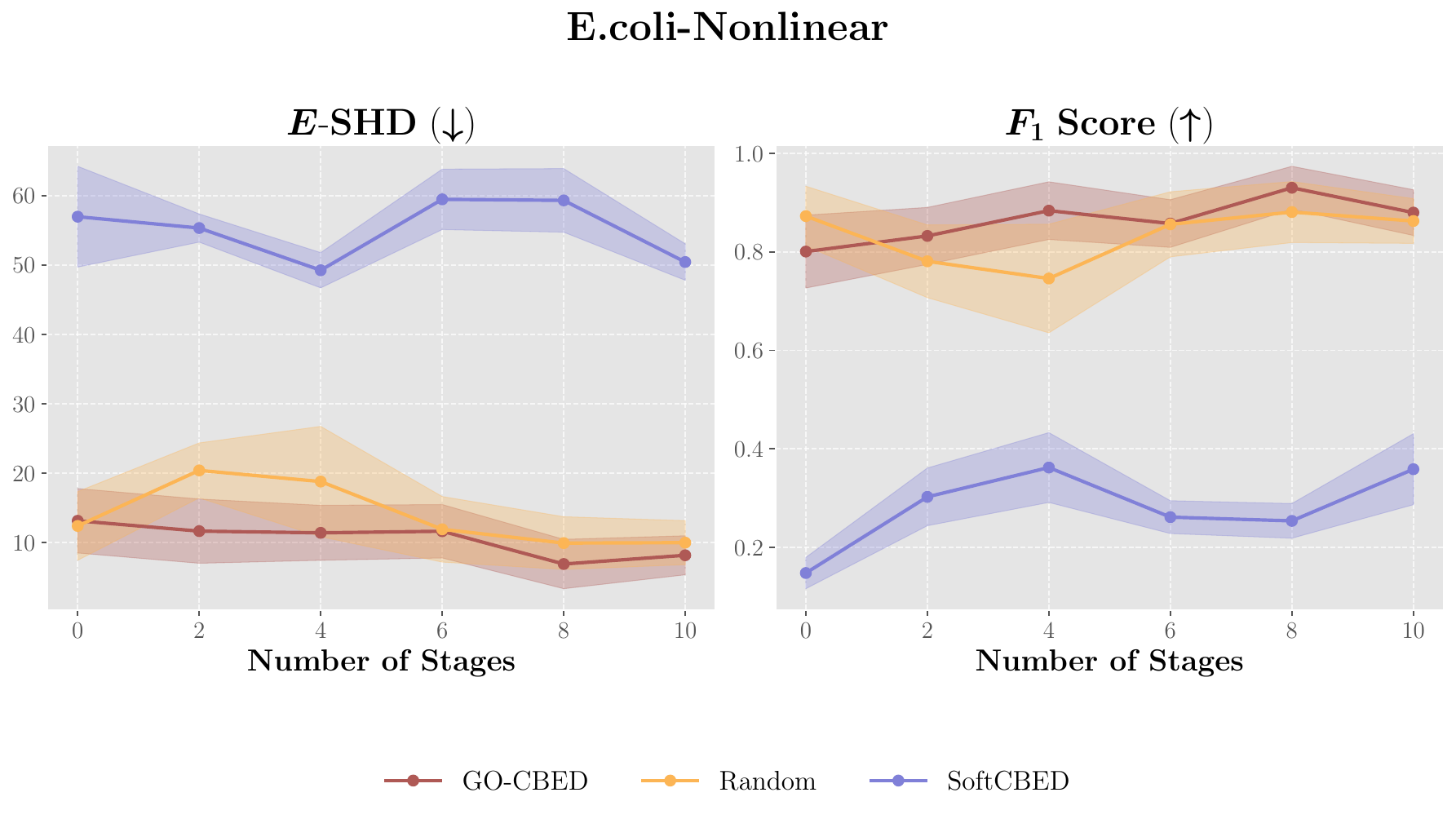}\\
    \includegraphics[width=0.49\linewidth]{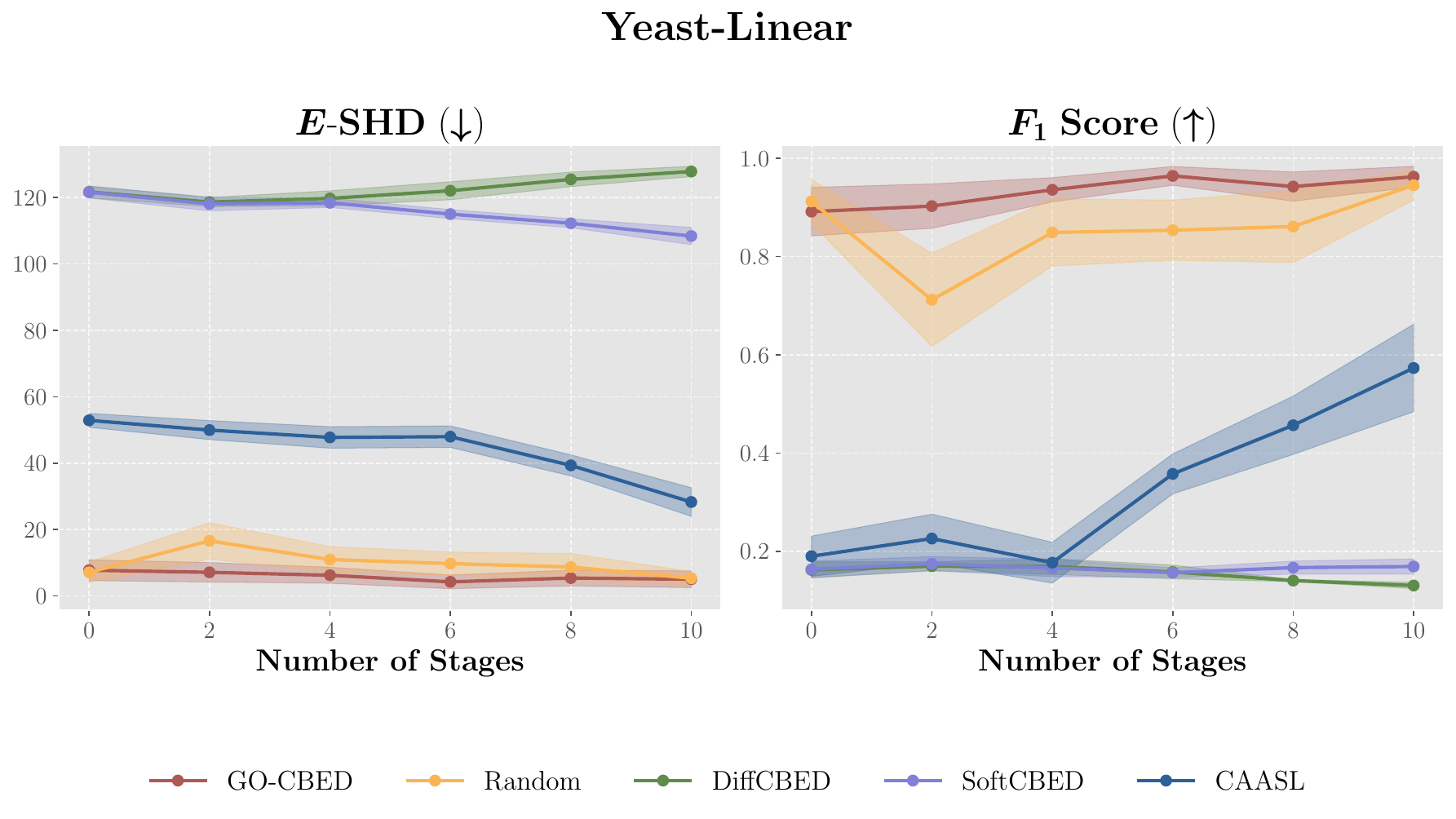}
    \includegraphics[width=0.49\linewidth]{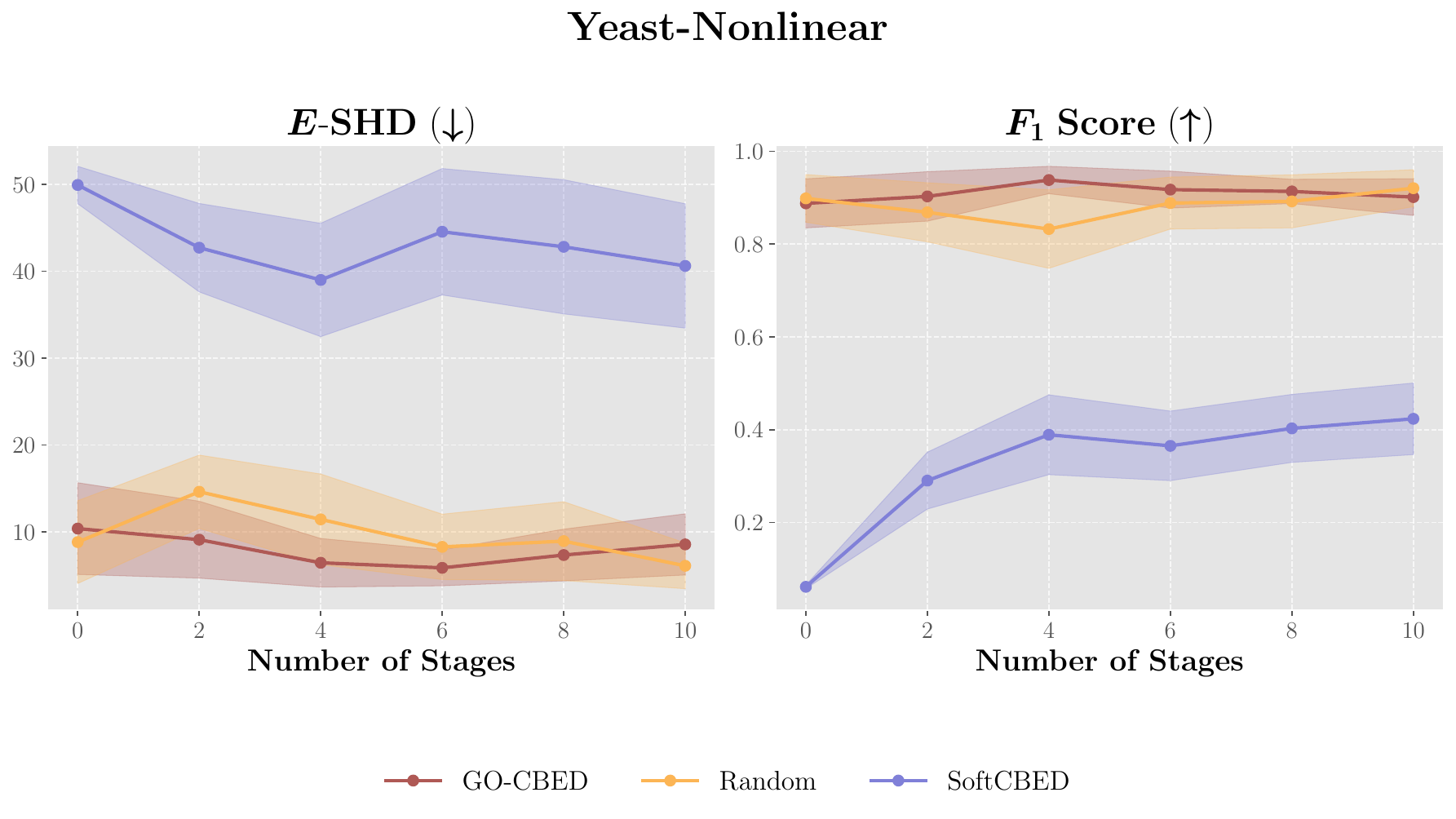}

    \caption{
    Performance comparison on semi-synthetic gene regulatory (\textit{E. coli} and Yeast) networks, with each method using its originally proposed posterior inference approach. GO-CBED demonstrates strong performance on causal tasks in biologically inspired settings. Shaded regions represent $\pm 1$ standard error across $10$ random seeds.  
    }
    \label{fig:semi_synthetic_scm_appendix}
\end{figure}

\subsection{Extended Evaluation on Causal Discovery Tasks}
\label{subsec: additional_causal_discovery_tasks}

We extend our analysis with additional causal discovery experiments on more complex settings, with nonlinear mechanisms in \textit{E. coli} networks (Figure~\ref{fig:ecoli_nonlinear_policy_comparison}), and both linear (Figure~\ref{fig:yeast_linear_policy_comparison}) and nonlinear (Figure~\ref{fig:yeast_nonlinear_policy_comparison}) mechanisms in Yeast gene regulatory networks. While GO-CBED continues to outperforms baseline methods, the performance gap narrows in the nonlinear setting, reflecting the increased difficulty of recovering the full causal structure under complex settings. These results align with our motivation for goal-oriented BOED: when the objective is to answer specific causal queries rather than reconstruct the entire model, targeted policies like GO-CBED-$\boldsymbol{z}$ offer greater efficiency and practical relevance.

\begin{figure}[htbp]
    \centering
    \includegraphics[width=\linewidth]{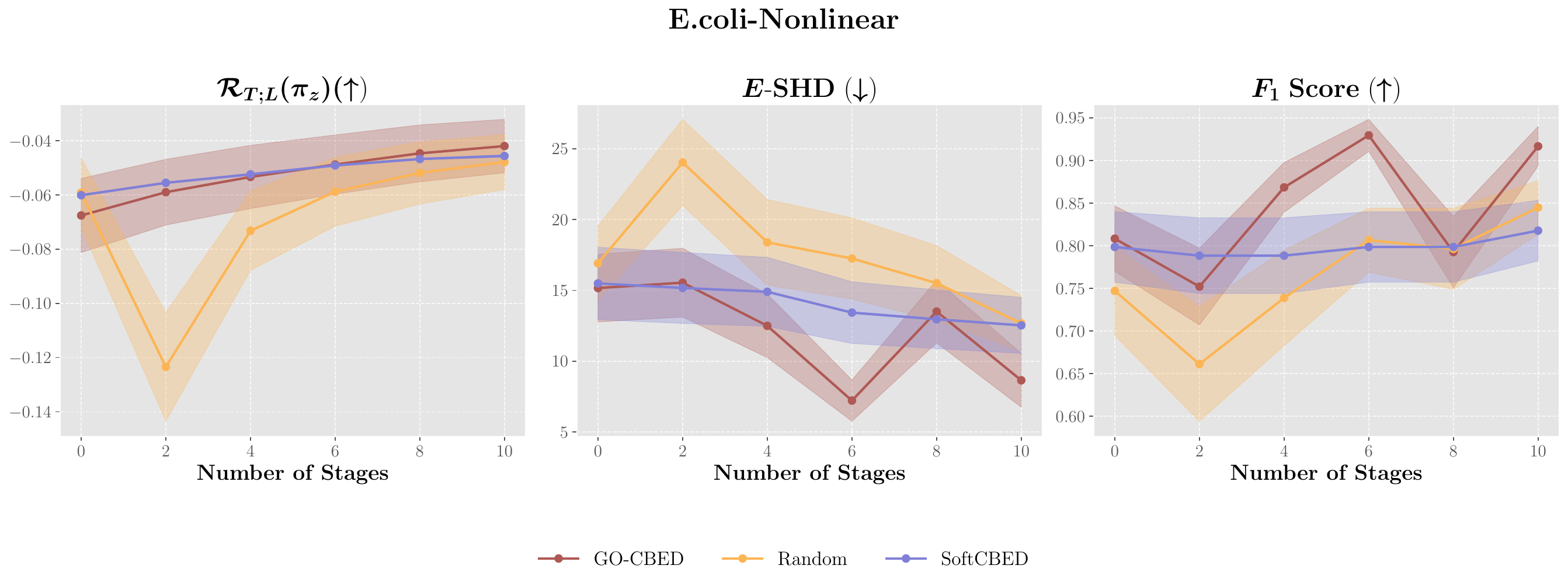}
    \caption{Causal discovery performance on nonlinear \textit{E. coli} gene regulatory networks. GO-CBED performs better or comparatively in terms of uncertainty reduction ($\mathcal{R}_{T; L}$), structural recovery ($\mathbb{E}$-SHD), and structural accuracy ($F_1$-score) compared to all baselines. Shaded regions indicate $\pm 1$ standard error across 10 random seeds.}
    \label{fig:ecoli_nonlinear_policy_comparison}
\end{figure}

\begin{figure}[htbp]
    \centering
    \includegraphics[width=\linewidth]{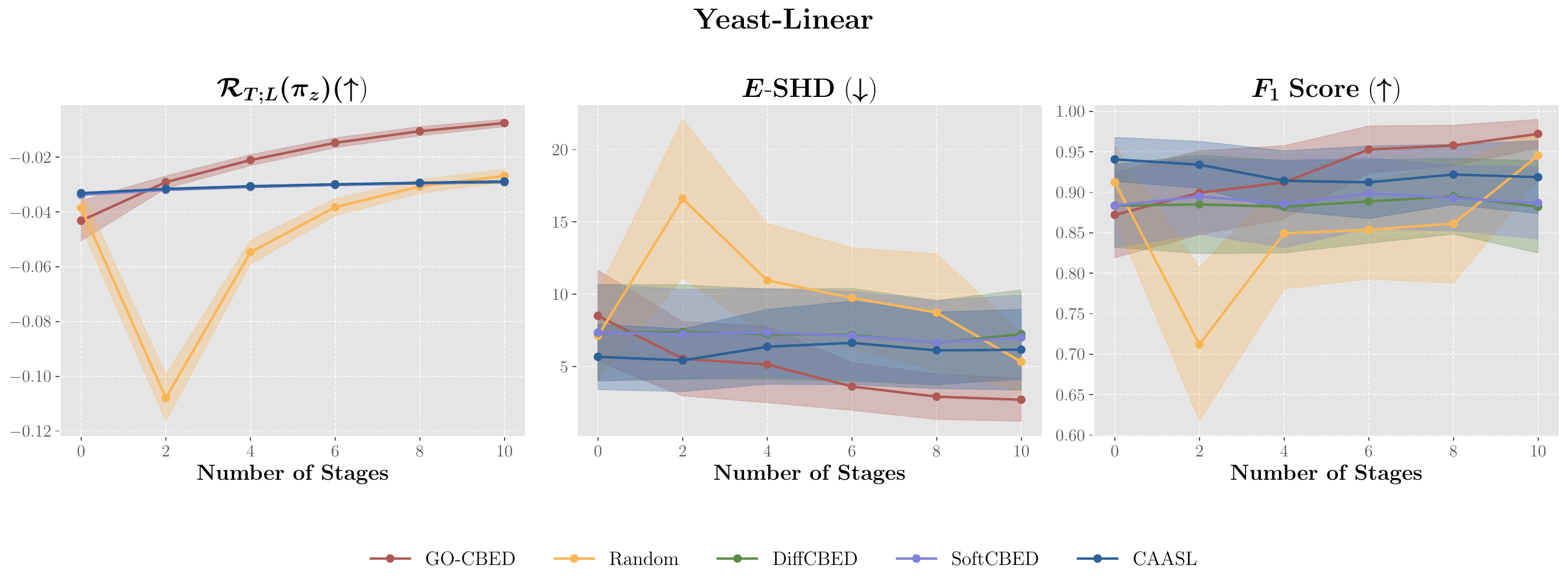} 
    \caption{Causal discovery performance on linear Yeast gene regulatory networks. 
    GO-CBED performs better or comparatively in terms of uncertainty reduction ($\mathcal{R}_{T; L}$), structural recovery ($\mathbb{E}$-SHD), and structural accuracy ($F_1$-score) compared to all baselines.
    Shaded regions represent $\pm 1$ standard error across $10$ random seeds.}
    \label{fig:yeast_linear_policy_comparison}
\end{figure}

\begin{figure}[htbp]
    \centering
    \includegraphics[width=\linewidth]{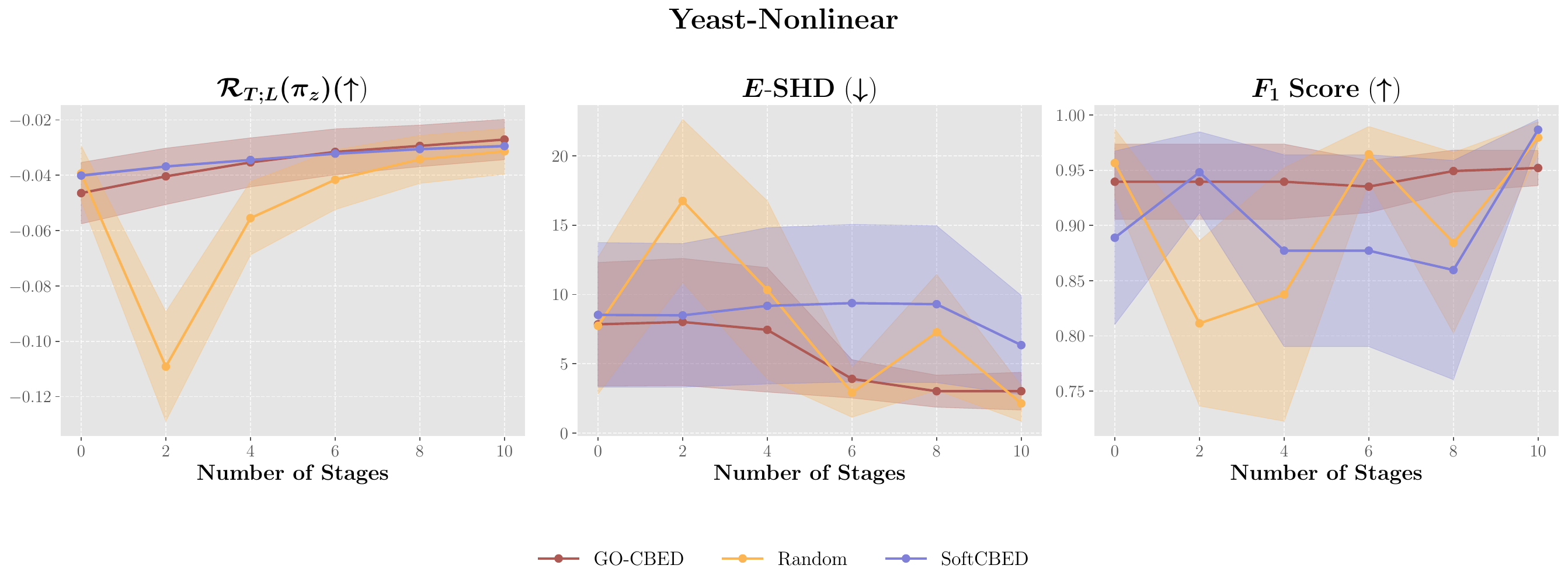}
    \caption{Causal discovery performance on nonlinear Yeast gene regulatory networks. 
    GO-CBED performs better or comparatively in terms of uncertainty reduction ($\mathcal{R}_{T; L}$), structural recovery ($\mathbb{E}$-SHD), and structural accuracy ($F_1$-score) compared to all baselines.
    Shaded regions represent $\pm 1$ standard error across 10 random seeds.}
    \label{fig:yeast_nonlinear_policy_comparison}
\end{figure}

\subsection{Distributional Shift in Observation Noise}
\label{subsec: distribution_shift}

We evaluate the robustness of GO-CBED's policy and posterior networks under distributional shifts in observation noise. At deployment, the noise variance $\sigma_i^2$ is sampled from an inverse Gamma distribution, $\sigma_i^2 \sim \text{InverseGamma}(10, 1)$, in contrast to the fixed variance ($\sigma_i^2 = 0.1$) assumed during training. For comparison, we include a random intervention policy baseline, paired with a posterior network trained specifically on data with the shifted noise distribution.

We first focus on causal reasoning tasks, with ER and SF graph priors over 10-node networks. As shown in Figure~\ref{fig: shifted_Goal}, GO-CBED consistently outperforms the random baseline, demonstrating the robustness of both its policy and posterior networks in the presence of heteroskedastic noise. In the causal discovery setting (Figure~\ref{fig: shifted_CD}), GO-CBED maintains strong performance, demonstrating its reliability across multiple causal tasks and noise conditions.
\begin{figure}[htbp]
    \centering
    \includegraphics[width=0.8\linewidth]{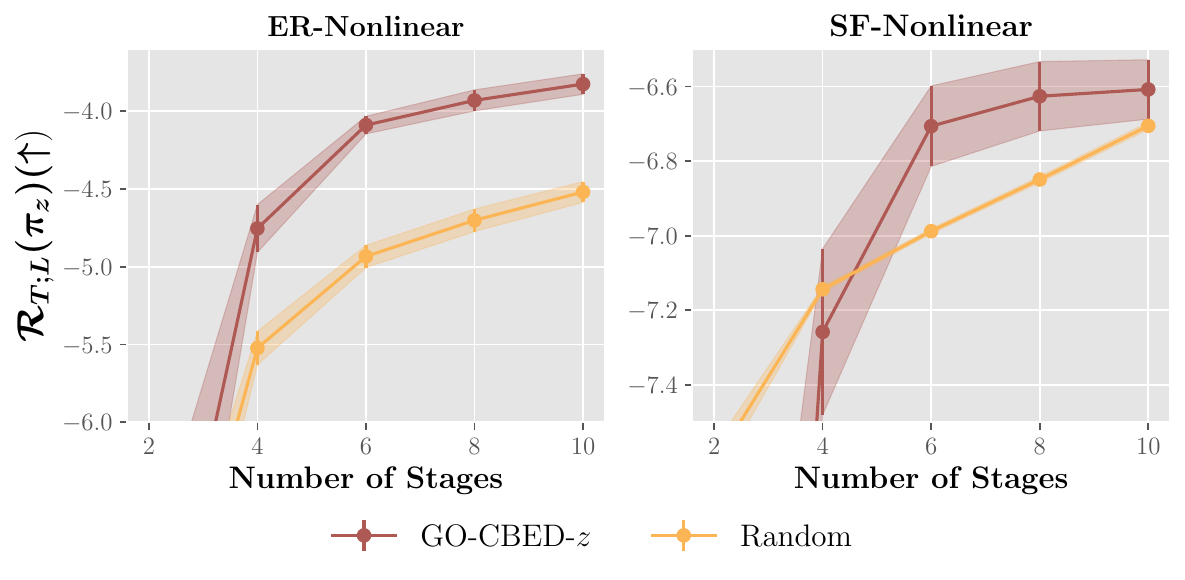}
    \caption{Evaluation of GO-CBED under a distributional shift in observation noise at deployment for causal reasoning tasks. GO-CBED consistently outperforms the random baseline that is using posterior networks pre-trained under the shifted noise, demonstrating the robustness of its jointly optimized policy and posterior networks.}
    \label{fig: shifted_Goal}
\end{figure}
\begin{figure}[htbp]
    \centering
    \includegraphics[width=0.8\linewidth]{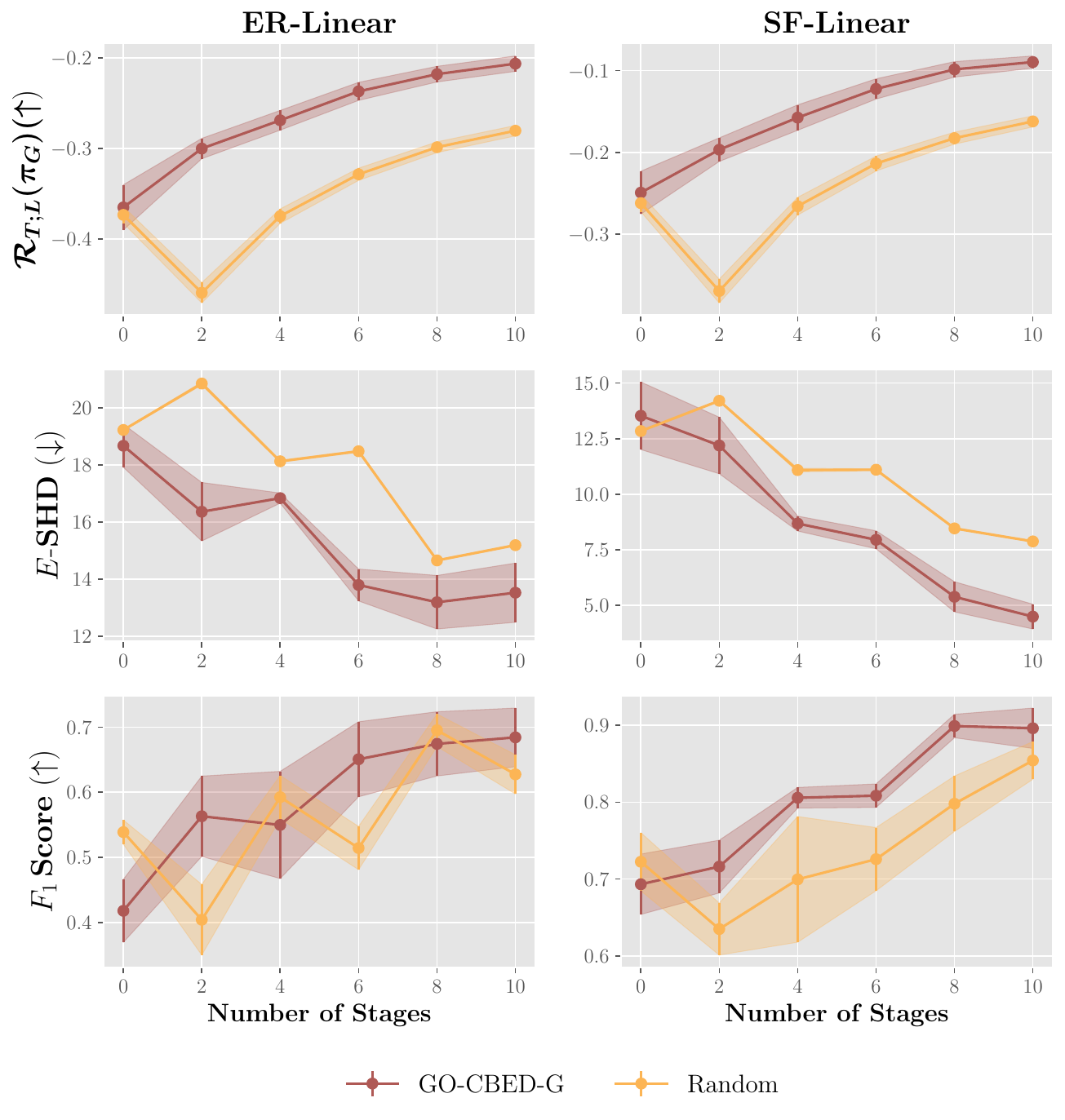}
    \caption{Evaluation of GO-CBED under a distributional shift in observation noise at deployment for causal discovery tasks. GO-CBED consistently outperforms the random baseline that is using posterior networks pre-trained under the shifted noise, demonstrating the robustness of its jointly optimized policy and posterior networks.}
    \label{fig: shifted_CD}
\end{figure}

\newpage
\appendix
\renewcommand{\theequation}{A\arabic{equation}}
\setcounter{equation}{0} %

\end{document}